\documentclass[10pt]{article}
\usepackage[utf8]{inputenc}
\usepackage[T1]{fontenc}
\usepackage[margin=1in, left=1.25in, right=1.25in, top=1in, bottom=1in]{geometry}
\usepackage{algorithm}
\usepackage{algorithmic}
\usepackage[linktoc=all, bookmarksdepth=3, pdfstartview=FitH]{hyperref}
\hypersetup{colorlinks=true, linkcolor=blue, filecolor=magenta, urlcolor=cyan}
\usepackage{amsmath}
\usepackage{amsfonts}
\usepackage{amssymb}
\usepackage{mathrsfs}
\usepackage{bbold}
\usepackage{graphicx}
\usepackage[export]{adjustbox}
\usepackage[dvipsnames]{xcolor}
\usepackage{float}
\usepackage{placeins}
\usepackage{caption}
\usepackage{booktabs}
\usepackage{multirow}
\usepackage[normalem]{ulem}
\usepackage{amsthm}
\usepackage{enumitem}
\usepackage{microtype}
\newtheorem{theorem}{Theorem}[section]
\newtheorem{lemma}[theorem]{Lemma}
\newtheorem{definition}[theorem]{Definition}
\newtheorem{corollary}[theorem]{Corollary}
\newtheorem{proposition}[theorem]{Proposition}
\newtheorem{remark}[theorem]{Remark}
\newtheorem{assumption}[theorem]{Assumption}

\usepackage{cleveref}
\crefname{assumption}{Assumption}{Assumptions}
\crefname{proposition}{Proposition}{Propositions}
\crefname{theorem}{Theorem}{Theorems}
\crefname{lemma}{Lemma}{Lemmas}
\crefname{remark}{Remark}{Remarks}
\crefname{algorithm}{Algorithm}{Algorithms}
\crefname{equation}{Eq.}{Eqs.}

\graphicspath{ {./} }

\title{Arithmetic Actor Heads and Training Stabilization for Out-of-Distribution Reinforcement Learning}

\author{
  Yifan Zhang\thanks{Central South University, \href{mailto:erzhu419@gmail.com}{erzhu419@gmail.com}, \href{mailto:204201048@csu.edu.cn}{204201048@csu.edu.cn}} \and
  Liang Zheng\thanks{Central South University, \href{mailto:zhengliang@csu.edu.cn}{zhengliang@csu.edu.cn}} \and
  Qifan Zhang\thanks{Monash Malaysia Faculty of IT, \href{mailto:zhangqifan0419@gmail.com}{zhangqifan0419@gmail.com}}
}
\date{}

\let\svthefootnote\thefootnote
\newcommand\blfootnotetext[1]{%
  \let\thefootnote\relax\footnote{#1}%
  \addtocounter{footnote}{-1}%
  \let\thefootnote\svthefootnote%
}
\let\svfootnotetext\footnotetext
\renewcommand\footnotetext[2][?]{%
  \if\relax#1\relax%
    \ifnum\value{footnote}=0\blfootnotetext{#2}\else\svfootnotetext{#2}\fi%
  \else%
    \if?#1\ifnum\value{footnote}=0\blfootnotetext{#2}\else\svfootnotetext{#2}\fi%
    \else\svfootnotetext[#1]{#2}\fi%
  \fi
}

\begin{document}
\maketitle

\begin{abstract}
Reinforcement learning (RL) policies can deteriorate under out-of-distribution (OOD) magnitude shifts. Starting from soft actor-critic (SAC) and its Bayesian Amnesic Piecewise-Robust (BAPR) predecessor, we study the causal-symbolic BAPR (CS-BAPR) family. The practical method combines six training-stabilization settings with alternative actor heads: a Neural Addition Unit (NAU) with a Neural Multiplication Unit (NMU)-inspired quadratic correction, a Kolmogorov-Arnold Network (KAN), or a multilayer perceptron (MLP) with rectified linear unit (ReLU) or hyperbolic-tangent activations.

For a differentiable deterministic policy map $\pi$ and an action-valued reference $\pi_{\mathrm{ref}}$, we prove the conditional bound
$\|\pi(x_{\mathrm{ood}})-\pi_{\mathrm{ref}}(x_{\mathrm{ood}})\|
\leq \delta+\varepsilon\|d\|+\tfrac{L_{\mathrm{err}}}{2}\|d\|^2$.
Here $x_{\mathrm{ood}}$ is a test state, $d=x_{\mathrm{ood}}-x_0$ is displacement from a training reference state $x_0$, $\delta$ and $\varepsilon$ bound the value and derivative errors at $x_0$, and $L_{\mathrm{err}}$ bounds derivative-error growth along the connecting segment. An isolated NAU is linear and has zero derivative curvature, whereas scalar ReLU has a derivative jump. These layer facts do not certify the historical actor, which contains nonlinear features, a quadratic branch, and action squashing. A revised smooth-feature variant admits an explicit whole-policy derivative bound. A separate frozen-operator result preserves BAPR's discounted Bellman contraction.

On a 12-line transit benchmark calibrated with Simulation of Urban MObility (SUMO), training uses nominal in-distribution (ID) demand. All four stabilized variants improve reported median returns over the unstabilized BAPR implementation under static, abrupt, and oscillating demand shifts up to $50\times$ nominal flow. Stabilization accounts for most of the gain; the NAU variant is best at the static $5\times$ point, while other heads perform better in several remaining regimes. Optional sparse identification of nonlinear dynamics (SINDy) and invariant risk minimization (IRM)-inspired components are described separately. Auxiliary linear-quadratic regulator (LQR) and Hopper experiments show task-dependent behavior. The evidence supports a transit-focused training recipe and a conditional policy-output analysis.
\end{abstract}

\noindent\textbf{Keywords:} OOD extrapolation; arithmetic actor heads; training stabilization; transit holding; robust RL.

\section{Introduction}

Policies trained under nominal conditions may perform poorly when passenger demand or other exogenous inputs increase beyond the training range. We call this setting \emph{magnitude extrapolation}. SAC provides the underlying continuous-control training framework~\cite{Haarnoja2018SAC}; BAPR supplies change-sensitive adaptation~\cite{Zhang2025BAPR}, and robust ensemble SAC (RE-SAC) supplies the critic ensemble and risk regularization~\cite{Zhang2025RESAC}.

Magnitude extrapolation connects two questions: whether training produces a stable controller, and whether the resulting policy responds appropriately to larger inputs. Arithmetic layers offer a useful inductive bias for numerical extrapolation~\cite{Trask2018NALU,Madsen2020NAU}, but a linear output layer alone does not make a nonlinear policy linear in the original state. We therefore distinguish the layer-level calculation, the conditional whole-policy error bound, and the observed control returns.

CS-BAPR combines four base stabilization settings with two recipes ported from BAPR: minimum-ensemble critic targets, reward-scale normalization, an entropy-temperature floor, removal of the behavior cloning (BC) anchor for stabilized variants, change-detection warmup and penalty scaling, and rollout-based one-sided surprise. Bayesian online changepoint detection (BOCD) provides the belief tracker~\cite{Adams2007BOCD}. The architectural variants share this training backbone.

The contributions are:
\begin{itemize}
    \item A transit-focused comparison of four actor-head choices under the same stabilization recipe, against an unstabilized and a legacy BAPR implementation.
    \item A policy-output extrapolation bound with explicit boundary errors and derivative-growth assumptions (\Cref{thm:ood_quad_nD}), together with layer-level NAU, quadratic-unit, and ReLU calculations (\Cref{sec:nau}).
    \item A frozen Bellman-operator contraction result (\Cref{thm:csbapr_contraction}) and a counterexample showing how sufficiently strong value-dependent penalties can destroy contraction (\Cref{thm:frozen_sym_necessary}).
    \item Conditional symbolic-reference results that state the required action-space alignment and cross-environment accuracy explicitly (\Cref{sec:sindy,sec:irm}).
\end{itemize}
The complete proofs and their dependencies are linked in Appendix~\ref{app:lean_overview}; implementation details appear in Appendices~\ref{app:operator_detail}--\ref{app:hyperparams}.

\section{Related work}

\subsection{Generalization and robust RL}
Simulation-trained locomotion can generalize to previously unseen terrain~\cite{Lee2020ESCP}. Robust Markov decision processes (MDPs) instead optimize performance over specified transition uncertainty sets~\cite{Wiesemann2013RobustMDP,Iyengar2005RobustDP}. Recent robust actor-critic and fitted-value methods extend such analyses to function approximation and offline or hybrid data~\cite{Zhou2023NaturalActorCritic,Panaganti2024RobustF}. Their guarantees depend on their uncertainty and coverage assumptions; they are not generally restricted to interpolation in a convex hull of training environments. Epistemic partially observable MDPs explain why uncertainty about test conditions can require adaptive behavior even when each individual environment is fully observed~\cite{Ghosh2021EpistemicPOMDP}. Our bound concerns the output of a fixed deterministic policy map along a state-space segment.

\subsection{Arithmetic networks and equation learning}
Neural Arithmetic Logic Unit (NALU) networks introduce gated arithmetic computations for numerical extrapolation~\cite{Trask2018NALU}. NAUs and NMUs separate addition and multiplication and use constrained weights with sparsity regularization~\cite{Madsen2020NAU}. The NAU is an addition/subtraction unit; the paper title \emph{Neural Arithmetic Units} denotes the broader family. Equation-learning networks recover analytic relations for extrapolation and control~\cite{Sahoo2018LearningEquations}; this is a distinct architecture rather than an application of NAUs. KANs use learnable spline functions on edges~\cite{Liu2024KAN}. We compare these architectural ideas in one training framework without claiming priority for applying arithmetic structure to control.

\subsection{Symbolic dynamics and invariance}
SINDy identifies sparse governing dynamics from a candidate library~\cite{Brunton2016SINDy}; SINDy-RL combines interpretable dynamics, reward, and policy models with RL~\cite{Zolman2025SINDyRL}. AI Feynman exploits structural properties of equations for symbolic regression~\cite{Udrescu2020AIFeynman}. These works motivate equation identification but do not imply that a dynamics Jacobian equals an optimal-policy Jacobian.

IRM seeks representations whose optimal predictor is shared across training environments~\cite{Arjovsky2019IRM}. Causal representation learning and causal RL involve additional assumptions about interventions, latent variables, and causal structure~\cite{Scholkopf2021CausalRL,CausalRLSurvey2023}; structural causal models (SCMs) provide a separate formal account~\cite{Peters2017Elements}. Our optional selector is inspired by cross-environment invariance and uses prediction-error variance. Its mathematical results assume the stated derivative accuracy; they do not establish causal discovery from the selector alone.

\subsection{Derivative regularity and ensemble stabilization}
Rowan analyzes ReLU failures in physics-informed objectives, including mis-specified automatic-differentiation gradients of discontinuous fields~\cite{ReLUFailure2023}. That result concerns physics-informed training, rather than a general inability of ReLU networks to approximate smooth functions or extrapolate. Here we use the elementary derivative jump at the scalar ReLU kink to delimit a smooth-derivative certificate. Ensemble pessimism also appears in offline RL~\cite{An2021UncertaintyEnsemble}; this motivates the minimum-ensemble comparison but does not transfer an offline performance guarantee to our online controller. Our Bellman analysis follows the frozen-parameter template of RE-SAC and BAPR~\cite{Zhang2025RESAC,Zhang2025BAPR}.

\section{Scope and applicability}
\label{sec:scope}

The primary empirical setting is transit holding under changes in passenger-demand magnitude. The six stabilization settings and the actor-head choice are evaluated jointly, with symbolic losses disabled. A demand multiplier changes the distribution of visited states; it is not itself the state-space displacement appearing in the theorem.

The optional symbolic extension requires an action-valued reference constructed from an identified model or other control information. A sparse dynamics model alone is insufficient: its state-evolution output has different physical units and generally a different dimension from the policy action. Tracking and counteraction tasks can both admit model-based feedback, but each requires an explicit dynamics-to-control construction. The transit results do not validate such a construction.

Cross-environment filtering additionally requires environments that reveal the invariant relation of interest. The configurable three-environment default is an implementation choice, not a universal identifiability threshold from IRM. The empirical and theoretical limits are collected in \Cref{sec:limitations}.

\section{Preliminaries}
\label{sec:preliminaries}

\paragraph{Notation.}
Let $E=\mathbb{R}^{d_x}$ and $F=\mathbb{R}^{d_a}$ be the ambient state and action spaces, where $d_x$ and $d_a$ are their dimensions. The admissible spaces are $\mathcal{S}\subseteq E$ and $\mathcal{A}\subseteq F$. We use $x$ for a state, $a$ for an action, $t$ for environment time, and $j$ for a training update. The stochastic actor has density $p_\theta(a\mid x)$, where $\theta$ denotes actor parameters. Its pre-squash Gaussian mean is $\mu_\theta(x)$; $\pi(x)$ denotes the deterministic action map after squashing and scaling. Thus $p_\theta$, $\mu_\theta$, and $\pi$ name different objects.

The training domain is $D_{\mathrm{train}}$; $\Omega\subseteq E$ is the convex analysis domain containing the segment of interest. $\pi_{\mathrm{ref}}:\Omega\to F$ is an action-valued reference. Write $d=x_{\mathrm{ood}}-x_0$ for displacement from a fixed $x_0\in D_{\mathrm{train}}$ to an OOD state $x_{\mathrm{ood}}$. $\mathrm{D}$ denotes the Fr\'echet derivative; vector norms are Euclidean and derivative norms are induced operator norms unless a Frobenius norm $\|\cdot\|_{\mathrm F}$ is specified. $\|Q\|_\infty$ denotes the supremum norm of a state-action value function.

The nonnegative constants $\delta$ and $\varepsilon$ bound reference-value and derivative error at $x_0$. $L_\pi$ and $M$ bound the derivative Lipschitz constants of $\pi$ and $\pi_{\mathrm{ref}}$, and $L_{\mathrm{err}}=L_\pi+M$ bounds their derivative-error growth. A subscript identifies the object whose regularity is bounded; the constant for a generic head $H$ is $L_H$. The operators $\mathbb E$ and $\operatorname{Var}$ denote expectation and variance.

\subsection{Maximum-entropy objective}

Following SAC~\cite{Haarnoja2018SAC}, the stochastic actor maximizes
\begin{equation}
J_{\mathrm{ent}}(p_\theta)=
\mathbb{E}_{p_\theta,P}\left[
\sum_{t=0}^{\infty}\gamma^t
\bigl(R(x_t,a_t)+\alpha\mathcal{H}(p_\theta(\cdot\mid x_t))\bigr)
\right].
\label{eq:max_entropy}
\end{equation}
Here $P$ is the transition kernel, $R$ is the one-step reward, $0\leq\gamma<1$ is the discount factor, $\alpha>0$ is the entropy temperature, and $\mathcal{H}(p)=-\int p(a)\log p(a)\,\mathrm d a$ is differential entropy for continuous actions. The experiments report unaugmented episode return, not the objective \eqref{eq:max_entropy}.

\subsection{BAPR backbone}

Let $\mathcal{M}$ be the finite mode set, $\ell\in\mathcal{M}$ a mode, and $\rho(\ell)$ its BOCD-derived belief probability. BAPR uses a frozen mixture of mode-conditional operators~\cite{Zhang2025BAPR},
\begin{equation}
\mathcal{T}^{\mathrm{BAPR}}_\rho Q(x,a)
=\sum_{\ell\in\mathcal{M}}\rho(\ell)\mathcal{T}_\ell Q(x,a).
\label{eq:bapr_operator}
\end{equation}
RE-SAC supplies aleatoric regularization and epistemic ensemble penalties~\cite{Zhang2025RESAC}. The contraction analysis of the mixture \eqref{eq:bapr_operator} fixes the actor density, entropy temperature, belief, and penalty values while comparing two value functions. It is an operator result for a fixed backup.

\section{Methodology}

\subsection{The extrapolation comparison}
\label{sec:ood_problem}

We compare $\pi$ with $\pi_{\mathrm{ref}}$ on the segment joining $x_0$ and $x_{\mathrm{ood}}$. In the default analysis, the reference is the affine tangent
\begin{equation}
\pi_{\mathrm{ref}}(x)=\pi(x_0)+\mathrm{D}\pi(x_0)(x-x_0).
\label{eq:tangent_reference}
\end{equation}
The tangent reference \eqref{eq:tangent_reference} gives $\delta=\varepsilon=M=0$ at $x_0$. The resulting bound measures deviation from the policy's own tangent; closeness to an optimal policy requires a separate reference.

\begin{assumption}[Reference and boundary accuracy]
\label{assump:reference}
The policy and reference are continuously differentiable on a neighborhood of the segment, with
$\|\pi(x_0)-\pi_{\mathrm{ref}}(x_0)\|\leq\delta$ and
$\|\mathrm{D}\pi(x_0)-\mathrm{D}\pi_{\mathrm{ref}}(x_0)\|\leq\varepsilon$.
\end{assumption}

\begin{assumption}[Reference derivative regularity]
\label{assump:smooth}
For $x,y\in\Omega$,
$\|\mathrm{D}\pi_{\mathrm{ref}}(x)-\mathrm{D}\pi_{\mathrm{ref}}(y)\|
\leq M\|x-y\|$, with finite $M\geq0$.
An affine reference has $M=0$.
\end{assumption}

\begin{assumption}[Whole-policy derivative regularity]
\label{assump:feature_lip}
For $x,y\in\Omega$,
$\|\mathrm{D}\pi(x)-\mathrm{D}\pi(y)\|\leq L_\pi\|x-y\|$,
with finite $L_\pi\geq0$.
This is a condition on the complete deterministic action map, including its feature extractor and output transformations.
\end{assumption}

\begin{definition}[Jacobian consistency (JC)]
\label{def:jac_consistency}
For an action-valued symbolic reference $\pi_{\mathrm{sym}}:\Omega\to F$ and nonnegative alignment-error bound $\varepsilon_2$, $\pi$ is $\varepsilon_2$-Jacobian consistent on $\Omega$ when
$\|\mathrm{D}\pi(x)-\mathrm{D}\pi_{\mathrm{sym}}(x)\|\leq\varepsilon_2$
for every $x\in\Omega$.
\end{definition}

\begin{definition}[Derivative regularity]
\label{def:arith_bias}
A differentiable head $H$ has derivative Lipschitz constant $L_H$ when the norm of its derivative difference is at most $L_H$ times input distance. We use this property to quantify curvature; it is shared by many smooth non-arithmetic maps.
\end{definition}

\subsection{Arithmetic and alternative actor heads}
\label{sec:nau}

\paragraph{NAU.}
An NAU~\cite{Madsen2020NAU} computes
$H_{\mathrm{NAU}}(z)=W_{\mathrm{NAU}}z$,
where $z\in\mathbb{R}^{d_z}$ is a feature vector, $d_z$ is feature dimension, and $W_{\mathrm{NAU}}$ is a fixed weight matrix at evaluation. Training clamps weights to $[-1,1]$ and encourages the entries toward $\{-1,0,1\}$.

\begin{theorem}[NAU derivative constancy]
\label{thm:nau_const}
For every fixed real matrix $W_{\mathrm{NAU}}$,
$\mathrm{D}H_{\mathrm{NAU}}(z)=W_{\mathrm{NAU}}$ for all $z$, so
$L_{H_{\mathrm{NAU}}}=0$.
\end{theorem}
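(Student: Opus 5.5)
The plan is to verify the Fréchet derivative directly from its definition, using linearity of the map $H_{\mathrm{NAU}}(z)=W_{\mathrm{NAU}}z$ on $\mathbb{R}^{d_z}$, and then read off the derivative Lipschitz constant from \Cref{def:arith_bias}.

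\emph{Step 1.} Fix any $z\in\mathbb{R}^{d_z}$ and any increment $h\in\mathbb{R}^{d_z}$. Linearity gives
$H_{\mathrm{NAU}}(z+h)-H_{\mathrm{NAU}}(z)-W_{\mathrm{NAU}}h = W_{\mathrm{NAU}}(z+h)-W_{\mathrm{NAU}}z-W_{\mathrm{NAU}}h=0.$
The remainder is therefore identically zero, so it is trivially $o(\|h\|)$. The candidate linear map $W_{\mathrm{NAU}}$ is bounded, since every linear map between finite-dimensional spaces is bounded. Hence $H_{\mathrm{NAU}}$ is Fréchet differentiable at $z$ with $\mathrm{D}H_{\mathrm{NAU}}(z)=W_{\mathrm{NAU}}$. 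Uniqueness of the Fréchet derivative means this is the derivative, not merely one admissible choice. Because $z$ was arbitrary, the identity holds for all $z$. No property of the entries of $W_{\mathrm{NAU}}$ is used, so the training-time clamping to $[-1,1]$ and the sparsity pressure toward $\{-1,0,1\}$ play no role.

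\emph{Step 2.} For any $z,y$ we have $\|\mathrm{D}H_{\mathrm{NAU}}(z)-\mathrm{D}H_{\mathrm{NAU}}(y)\|=\|W_{\mathrm{NAU}}-W_{\mathrm{NAU}}\|=0\le 0\cdot\|z-y\|$. By \Cref{def:arith_bias}, $0$ is therefore a valid derivative Lipschitz constant. It is also the smallest one, because the constant must be nonnegative. This gives $L_{H_{\mathrm{NAU}}}=0$.

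There is no real obstacle here. The only care needed is interpretive: the statement concerns the isolated layer as a function of its feature input $z$, with $W_{\mathrm{NAU}}$ held fixed at evaluation. It does not concern the composed policy $\pi$ as a function of the state $x$, whose curvature \Cref{assump:feature_lip} must handle separately.
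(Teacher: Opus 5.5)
Your proposal is correct and follows the paper's argument: the derivative of the linear map $z\mapsto W_{\mathrm{NAU}}z$ is the fixed matrix, so derivative differences vanish and $L_{H_{\mathrm{NAU}}}=0$, and the discreteness of the weights is not used. You additionally verify the Fr\'echet remainder explicitly and note that $0$ is the smallest admissible constant, both of which the paper leaves implicit.
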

\begin{proof}
Differentiation of a linear map gives its fixed matrix. The derivative difference between any two feature vectors is zero. Discrete weights are unnecessary for this derivative identity.
\end{proof}

\paragraph{Simplified quadratic correction.}
The published NMU multiplies selected input coordinates~\cite{Madsen2020NAU}. Our implementation uses the simpler scalar quadratic map
$H_{\mathrm{quad}}(z)=c_{\mathrm{quad}}z^2$, with fixed real coefficient $c_{\mathrm{quad}}$, as an NMU-inspired correction.

\begin{theorem}[Scalar quadratic derivative regularity]
\label{thm:nmu_bias}
$H_{\mathrm{quad}}'(z)=2c_{\mathrm{quad}}z$ has derivative Lipschitz constant $2|c_{\mathrm{quad}}|$.
A subsequent fixed scalar NAU weight $w_{\mathrm{NAU}}$ gives constant $2|w_{\mathrm{NAU}}c_{\mathrm{quad}}|$.
\end{theorem}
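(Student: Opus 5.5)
The proof is a direct computation of the derivative and of the Lipschitz constant of that derivative, in the scalar setting of Definition~\ref{def:arith_bias}. First, differentiate $H_{\mathrm{quad}}(z)=c_{\mathrm{quad}}z^2$ as a real polynomial to obtain $H_{\mathrm{quad}}'(z)=2c_{\mathrm{quad}}z$. For any $z,y\in\mathbb{R}$ this gives
\[
|H_{\mathrm{quad}}'(z)-H_{\mathrm{quad}}'(y)| = |2c_{\mathrm{quad}}(z-y)| = 2|c_{\mathrm{quad}}|\,|z-y|,
\]
so $2|c_{\mathrm{quad}}|$ is a valid derivative Lipschitz constant.

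Because the relation above is an equality and not merely an inequality, no smaller constant works whenever $c_{\mathrm{quad}}\neq0$. I would record this sharpness explicitly, since it shows the stated value is \emph{the} constant rather than just an upper bound. The degenerate case $c_{\mathrm{quad}}=0$ is immediate, because both sides vanish.

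For the composition with a subsequent fixed scalar NAU weight, apply Theorem~\ref{thm:nau_const} in dimension one: the map $u\mapsto w_{\mathrm{NAU}}u$ has constant derivative $w_{\mathrm{NAU}}$. The chain rule then gives
\[
(w_{\mathrm{NAU}}H_{\mathrm{quad}})'(z)=2w_{\mathrm{NAU}}c_{\mathrm{quad}}z.
\]
Repeating the same difference computation yields the exact constant $2|w_{\mathrm{NAU}}c_{\mathrm{quad}}|$. More generally, this is an instance of the fact that post-composition with a fixed linear map $W$ scales a derivative Lipschitz constant by at most $\|W\|$, with equality in the scalar case.

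I do not expect a genuine obstacle here. The only point requiring care is interpretation. The statement concerns the isolated scalar head with a fixed coefficient, not the whole actor, so it says nothing about $L_\pi$ in Assumption~\ref{assump:feature_lip}. I would add a remark that the constant holds globally on $\mathbb{R}$, with no domain restriction needed, because the second derivative $2c_{\mathrm{quad}}$ is constant.
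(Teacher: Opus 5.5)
Your proposal is correct and follows the paper's proof: both compute $|H_{\mathrm{quad}}'(z_1)-H_{\mathrm{quad}}'(z_2)|=2|c_{\mathrm{quad}}||z_1-z_2|$ directly, and then note that multiplying by the fixed weight $w_{\mathrm{NAU}}$ scales both sides by $|w_{\mathrm{NAU}}|$. Your sharpness remark (the relation is an equality) and your chain-rule framing via \Cref{thm:nau_const} are valid additions that the paper leaves implicit.
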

\begin{proof}
$|H_{\mathrm{quad}}'(z_1)-H_{\mathrm{quad}}'(z_2)|
=2|c_{\mathrm{quad}}||z_1-z_2|$. Multiplication by $w_{\mathrm{NAU}}$ scales both sides by $|w_{\mathrm{NAU}}|$.
\end{proof}

\begin{theorem}[Scalar ReLU derivative jump]
\label{thm:relu_impossible}
The scalar map $H_{\mathrm{ReLU}}(z)=\max(0,z)$ has no finite Lipschitz derivative on any interval containing negative and positive inputs.
\end{theorem}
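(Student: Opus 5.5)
The plan is a direct contradiction argument built on the explicit piecewise form of the derivative. On $(-\infty,0)$ the map $H_{\mathrm{ReLU}}$ is identically zero, so $H_{\mathrm{ReLU}}'(z)=0$ there, and on $(0,\infty)$ it is the identity, so $H_{\mathrm{ReLU}}'(z)=1$ there. We also have to decide how to read ``Lipschitz derivative'' at the kink. At $z=0$ the one-sided difference quotients are $0$ and $1$, so the classical derivative does not exist. Any interval $I$ containing a negative and a positive input also contains $0$, so $H_{\mathrm{ReLU}}$ is already not differentiable on $I$. We will make this point explicitly, and then prove the stronger statement that no extension of the derivative can help: even if $H_{\mathrm{ReLU}}'$ is assigned an arbitrary value at $0$ (such as the automatic-differentiation convention $0$), or is considered only on $I\setminus\{0\}$, no finite constant $L_{H_{\mathrm{ReLU}}}$ in the sense of \Cref{def:arith_bias} exists.

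For the quantitative step, suppose $|H'(z_1)-H'(z_2)|\le L|z_1-z_2|$ for all $z_1,z_2\in I\setminus\{0\}$ with some finite $L\ge0$. Because $I$ is an interval containing points of both signs, it contains $[-\eta,\eta]$ for some $\eta>0$. For every $0<h\le\eta$, take $z_1=h$ and $z_2=-h$. The derivatives are $1$ and $0$, so the hypothesis gives $1\le 2Lh$. Letting $h\to0^+$ turns this into $1\le 0$, a contradiction. If instead some value $v$ is assigned at $0$, compare $z=0$ with $z=h$ when $v\ne1$, and $z=0$ with $z=-h$ when $v=1$. Either comparison gives $|1-v|\le Lh$ or $|v|\le Lh$ with a left side bounded away from $0$, so the same limit argument fails.

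I do not expect a substantive obstacle. The only care needed is in the wording of the statement: the objection ``the derivative does not even exist at $0$'' has to be handled so that the conclusion does not become vacuous. The contradiction should therefore rest on the derivative jump between nearby points $\pm h$, which does not depend on any convention at the kink. A closing remark will contrast this with \Cref{thm:nau_const} and \Cref{thm:nmu_bias}, where the derivative constants are $0$ and $2|c_{\mathrm{quad}}|$. It will also note that the result is about the scalar layer only and does not certify or refute whole-policy regularity, which is governed by \Cref{assump:feature_lip}.
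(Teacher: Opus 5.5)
Your proposal is correct and follows the paper's proof. Both use derivative $0$ on the negative side and $1$ on the positive side, and compare the symmetric points $\pm h$ with $h$ small, which forces $1\le 2Lh$. Your extra treatment of a value assigned at $0$ is redundant, since a Lipschitz bound on $I$ restricts to one on $I\setminus\{0\}$, but it is harmless, and allowing $L\ge0$ from the start covers the paper's separate remark that $L=0$ is impossible.
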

\begin{proof}
For $z<0$ the derivative is zero; for $z>0$ it is one. If a finite $L_{H_{\mathrm{ReLU}}}>0$ bounded the derivative difference, taking
$z_1=-q$ and $z_2=q$ with $0<q<(2L_{H_{\mathrm{ReLU}}})^{-1}$
would require $1\leq2L_{H_{\mathrm{ReLU}}}q<1$.
Here $q$ is a positive test offset. $L_{H_{\mathrm{ReLU}}}=0$ is also impossible, and the derivative is undefined at zero.
\end{proof}
This statement does not rule out affine ReLU networks, segments staying in one activation region, or piecewise analyses. Smooth MLPs and sufficiently smooth spline networks can satisfy the whole-map condition on a specified domain; the constant also depends on their weights and output transformations.

\begin{remark}[Composition requires derivative control of the inner map]
\label{rem:theory_practice}
Write the deterministic normalized action as
$\pi=\tanh\circ H_{\mathrm{mix}}\circ G$, where $G:E\to\mathbb R^{d_z}$ is the feature extractor and $H_{\mathrm{mix}}:\mathbb R^{d_z}\to F$ is the mixed arithmetic head.
If $G$ and a head $H$ are continuously differentiable, their derivative norms are bounded by $K_G,K_H$, and their derivative Lipschitz constants are $L_G,L_H$, then
\begin{equation}
L_{H\circ G}\leq L_HK_G^2+K_HL_G.
\label{eq:composition_rule}
\end{equation}
Thus a function-Lipschitz inner map is insufficient. LeakyReLU features have derivative jumps across activation boundaries; a tanh output also introduces curvature and saturation. Appendix~\ref{app:leff} proves \eqref{eq:composition_rule}, gives the action-squashing term in \eqref{eq:complete_composition}, and derives the revised smooth-feature certificate. The isolated NAU constant cannot be substituted for $L_\pi$ of the historical actor.
\end{remark}
Appendix~\ref{app:nau_details} gives the architecture and weight regularizer \eqref{eq:nau_regularizer}. In the historical comparison, \texttt{csbapr-tanh} uses smooth MLP activations; \texttt{csbapr-no-nau} retains ReLU hidden activations.

\subsection{Training stabilization}
\label{sec:training_fixes}

We use six settings in the BAPR training backbone. Exponential moving average (EMA) normalization stabilizes reward scale, while temporal-difference (TD) targets and entropy tuning retain the SAC structure. The stabilized variants also use actor weight decay, with head-specific regularizers where applicable. These settings are evaluated as a package; the tables do not identify an individual setting's causal contribution.

\paragraph{Four base settings.}
\begin{enumerate}[label=(A\arabic*),leftmargin=2em]
    \item \textbf{Minimum-ensemble target.} Replace per-critic next-state targets by
    $Q_{\min}(x',a')=\min_{1\leq k\leq N_{\mathrm{crit}}}Q_{\bar\phi_k}(x',a')$,
    where $N_{\mathrm{crit}}$ is ensemble size, $\phi_k$ are critic parameters, and $\bar\phi_k$ are target-network parameters. The full target is given in \eqref{eq:critic_target}.
    \item \textbf{Reward normalization.} At training update $j$, set
    $\sigma_{r,j+1}^2=0.99\sigma_{r,j}^2+0.01\operatorname{Var}(r_{\mathrm{batch}})$ and divide rewards by $\max(\sigma_{r,j+1},10^{-6})$. Here $\sigma_{r,j}$ is the running reward scale and $r_{\mathrm{batch}}$ is a replay-batch reward vector.
    \item \textbf{Entropy floor.} Enforce $\alpha\geq\alpha_{\min}=0.01$ during temperature tuning.
    \item \textbf{BC removal.} In the current transit script, set the BC coefficient $\beta_{\mathrm{bc}}=0$ for all four stabilized variants. The pristine baseline retains $\beta_{\mathrm{bc}}=0.001$. Thus BC removal is part of the stabilization comparison, not a difference between the four stabilized heads.
\end{enumerate}

\paragraph{Two BAPR recipes.}
\begin{enumerate}[label=(B\arabic*),leftmargin=2em]
    \item \textbf{Warmup and penalty scaling.} Set the belief-weighted penalty strength $\lambda_w$ to zero during the first 100 updates, then use
    $\beta_{\mathrm{eff}}=-2-c_{\mathrm{pen}}\lambda_w$
    with $c_{\mathrm{pen}}=2$ rather than 5. Here $\beta_{\mathrm{eff}}$ is the actor ensemble-pessimism coefficient and $c_{\mathrm{pen}}$ its scale. At $\lambda_w=0.3$, the coefficients are $-2.6$ and $-3.5$, respectively.
    \item \textbf{Recent-rollout, one-sided surprise.} Compute BOCD surprise on the latest $N_{\mathrm{tail}}=256$ rollout transitions when supplied by the training script. For the ensemble-spread signal $\sigma_{Q,t}$, use
    $\max(\sigma_{Q,t}/\sigma_{Q,t-1}-1,0)$
    instead of a two-sided absolute change. The implementation stabilizes the ratio numerically. The scalar surprise is denoted $\chi_t$, reserving symbolic coefficients for equation identification.
\end{enumerate}
Table~\ref{tab:hyperparams} in Appendix~\ref{app:hyperparams} lists implementation defaults and overrides, and \Cref{alg:csbapr} shows their place in the training loop.

\subsection{Optional symbolic-reference extension}
\label{sec:sindy}

A dynamics model and a policy reference have different roles. Let
$f_{\mathrm{dyn}}(x,a)$ be the conditional mean one-step state increment and
$f_{\mathrm{sym}}(x,a)$ its identified counterpart, both with values in $E$.
This convention matches the implemented discrete fitting target.
For a state-only fit, the action argument is absent and the fitted relation describes data collected under the exploration policy.

SINDy uses a scalar basis library
$\boldsymbol{\varphi}=(\varphi_1,\ldots,\varphi_{N_{\mathrm{lib}}})$,
where $N_{\mathrm{lib}}$ is the number of basis terms. With coefficient matrix $\Xi$,
$f_{\mathrm{sym}}(x,a)=\Xi\boldsymbol{\varphi}(x,a)$.
The regression data comprise joint state-action input matrix $\mathsf X$, output matrix $\mathsf Y$, and library design matrix $\boldsymbol{\Phi}(\mathsf X)$. The implementation uses sequentially thresholded least squares (STLSQ)~\cite{Brunton2016SINDy}. The revised fitting and frozen wrapper both consume actions; Appendix~\ref{app:sindy_details} specifies their alignment and state-increment target.

\begin{assumption}[Action-valued symbolic reference]
\label{assump:sindy}
For the optional theorem, a control construction $\mathcal{C}$ produces an action-valued reference
$\pi_{\mathrm{sym}}=\mathcal{C}(f_{\mathrm{sym}}):\Omega\to F$.
Its derivative error relative to $\pi_{\mathrm{ref}}$ is bounded by $\varepsilon_1$ on the stated domain. This condition must be established after the control construction; it does not follow solely from a small dynamics prediction error.
\end{assumption}

For a homogeneous linear discrete fit, the revised LQR construction uses
$f_{\mathrm{sym}}(x,a)=(\widehat A-I)x+\widehat B_{\mathrm{in}}a$,
where $\widehat A$ and $\widehat B_{\mathrm{in}}$ are the identified state and input matrices and $I$ is the identity matrix.
Given state and normalized-action cost matrices $Q_{\mathrm{cost}}$ and $R_{\mathrm{cost}}$, a discrete algebraic Riccati solver determines feedback gain $\widehat K_{\mathrm{fb}}$.
The action-valued reference is $\pi_{\mathrm{sym}}(x)=\tanh(-\widehat K_{\mathrm{fb}}x)$, with componentwise tanh.
It is a smooth bounded reference, rather than the constrained LQR optimum.
Its derivative Lipschitz constant is at most
$\frac{4}{3\sqrt3}\|\widehat K_{\mathrm{fb}}\|_{\mathrm{op}}^2$.
The input matrices come from collected transitions, not from the simulator's true matrices; only the task costs enter the control construction. Symbolic penalties require this action-valued construction and are not applied directly to a state-valued dynamics model.

\begin{theorem}[Symbolic-reference error composition]
\label{thm:sindy_jac}
On the same domain and against the same identified reference, if
$\|\mathrm{D}\pi_{\mathrm{sym}}-\mathrm{D}\pi_{\mathrm{ref}}\|\leq\varepsilon_1$
and
$\|\mathrm{D}\pi-\mathrm{D}\pi_{\mathrm{sym}}\|\leq\varepsilon_2$,
then $\|\mathrm{D}\pi-\mathrm{D}\pi_{\mathrm{ref}}\|\leq\varepsilon_1+\varepsilon_2$.
\end{theorem}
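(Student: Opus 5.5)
The plan is a pointwise triangle inequality in the space of linear maps $E\to F$ with the induced operator norm. Fix an arbitrary $x\in\Omega$, the common domain on which both hypotheses hold. The maps $\pi$, $\pi_{\mathrm{sym}}$, and $\pi_{\mathrm{ref}}$ are all differentiable there, since the hypotheses refer to their derivatives. Each of $\mathrm{D}\pi(x)$, $\mathrm{D}\pi_{\mathrm{sym}}(x)$, and $\mathrm{D}\pi_{\mathrm{ref}}(x)$ is therefore an element of the normed vector space $\mathcal{L}(E,F)$.

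The key step is the algebraic decomposition
\[
\mathrm{D}\pi(x)-\mathrm{D}\pi_{\mathrm{ref}}(x)
=\bigl(\mathrm{D}\pi(x)-\mathrm{D}\pi_{\mathrm{sym}}(x)\bigr)
+\bigl(\mathrm{D}\pi_{\mathrm{sym}}(x)-\mathrm{D}\pi_{\mathrm{ref}}(x)\bigr).
\]
The operator norm is a norm, so it satisfies the triangle inequality: for linear $S,T$ and any unit vector $v$, $\|(S+T)v\|\le\|Sv\|+\|Tv\|\le\|S\|+\|T\|$, and taking the supremum over $v$ gives $\|S+T\|\le\|S\|+\|T\|$. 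Applying this to the decomposition gives
\[
\|\mathrm{D}\pi(x)-\mathrm{D}\pi_{\mathrm{ref}}(x)\|\le\|\mathrm{D}\pi(x)-\mathrm{D}\pi_{\mathrm{sym}}(x)\|+\|\mathrm{D}\pi_{\mathrm{sym}}(x)-\mathrm{D}\pi_{\mathrm{ref}}(x)\|.
\]
The first term is at most $\varepsilon_2$ by the Jacobian-consistency hypothesis (\Cref{def:jac_consistency}), and the second is at most $\varepsilon_1$ by the reference-accuracy hypothesis (\Cref{assump:sindy}). Since $x\in\Omega$ was arbitrary, the bound $\varepsilon_1+\varepsilon_2$ holds on all of $\Omega$.

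No step here is a genuine mathematical obstacle. The only care needed is in the hypotheses' bookkeeping, which is why the statement says \emph{on the same domain and against the same identified reference}. Both bounds must hold at the same points $x$, so if they were given on different sets the conclusion holds only on their intersection. Both must also involve the same $\pi_{\mathrm{sym}}$ so that the middle terms cancel. Finally, both must use the same induced norm; a Frobenius bound would have to be converted via $\|\cdot\|\le\|\cdot\|_{\mathrm F}$ first. Once these are stated explicitly, the result is the one-line triangle inequality.
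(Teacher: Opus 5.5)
Your proof is correct and is the same as the paper's: insert $\mathrm{D}\pi_{\mathrm{sym}}(x)$ at each point of the common domain and apply the triangle inequality for the operator norm. You also note the same caveat the paper does, that the domain, the intermediate reference $\pi_{\mathrm{sym}}$, and the action-valued codomains must agree.
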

\begin{proof}
Insert $\mathrm{D}\pi_{\mathrm{sym}}$ and apply the triangle inequality. The domain and both codomains must agree.
\end{proof}

To encourage the alignment in \Cref{def:jac_consistency}, the ideal action-valued JC loss is
\begin{equation}
\mathcal{L}_{\mathrm{JC}}=
\mathbb{E}_{x\sim\mathcal{D}}\!
\left[\|\mathrm{D}\pi(x)-\mathrm{D}\pi_{\mathrm{sym}}(x)\|_{\mathrm F}^{2}\right],
\label{eq:jac_loss}
\end{equation}
where $\mathcal{D}$ denotes the replay sampling distribution. The revised implementation evaluates the full deterministic-action Jacobian, with one row per action coordinate, and uses the squared Frobenius loss in \eqref{eq:jac_loss}. Symbolic Jacobians are detached; the critic uses a detached batch scalar and the actor uses the unclipped loss with gradient-norm clipping. The historical symbolic implementation instead compared summed state-increment gradients with the pre-squash actor mean. Its scalar maps were
$\psi_{\mathrm{pol}}(x)=\sum_{i_a=1}^{d_a}\mu_{\theta,i_a}(x)$ and
$\psi_{\mathrm{sym}}(x)=\sum_{i_x=1}^{d_x}f_{\mathrm{sym},i_x}(x)$,
where $i_a$ and $i_x$ index action and state coordinates, respectively.
Its historical loss was
\begin{equation}
\mathcal{L}_{\mathrm{proj}}=
\mathbb{E}_{x\sim\mathcal{D}}\!
\left[\frac{\|\nabla_x\psi_{\mathrm{pol}}(x)
-\nabla_x\psi_{\mathrm{sym}}(x)\|^2}{d_x}\right].
\label{eq:projected_loss}
\end{equation}
The historical diagnostic \eqref{eq:projected_loss} does not establish full action-valued JC. The revised loss removes coordinate cancellation and matches the action units of the reference. Primary transit runs disable both symbolic weights.

\begin{theorem}[Conditional symbolic-to-OOD pipeline]
\label{thm:sindy_pipeline}
Under the two derivative-error bounds in \Cref{thm:sindy_jac} everywhere on the segment and base error at most $\delta$,
\begin{equation}
\|\pi(x_{\mathrm{ood}})-\pi_{\mathrm{ref}}(x_{\mathrm{ood}})\|
\leq\delta+(\varepsilon_1+\varepsilon_2)\|d\|.
\label{eq:symbolic_linear}
\end{equation}
If only the boundary errors are known, the linear guarantee \eqref{eq:symbolic_linear} is replaced by \Cref{thm:ood_quad_nD}, adding the term $L_{\mathrm{err}}\|d\|^2/2$ under its regularity assumptions.
\end{theorem}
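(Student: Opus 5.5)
The plan is to apply the fundamental theorem of calculus to the error map $e(x)=\pi(x)-\pi_{\mathrm{ref}}(x)$ along the segment $x_0+s\,d$, $s\in[0,1]$. By \Cref{assump:reference}, both $\pi$ and $\pi_{\mathrm{ref}}$ are continuously differentiable on a neighborhood of the segment. Hence $g(s)=e(x_0+s\,d)$ is $C^1$ on $[0,1]$ with $g'(s)=\bigl(\mathrm{D}\pi(x_0+s\,d)-\mathrm{D}\pi_{\mathrm{ref}}(x_0+s\,d)\bigr)d$. This gives
\begin{equation*}
e(x_{\mathrm{ood}})=e(x_0)+\int_0^1\bigl(\mathrm{D}\pi(x_0+s\,d)-\mathrm{D}\pi_{\mathrm{ref}}(x_0+s\,d)\bigr)d\,\mathrm{d}s .
\end{equation*}
The integral is of a continuous $F$-valued function, so the norm of the integral is at most the integral of the norm.

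For the linear bound \eqref{eq:symbolic_linear}, I apply \Cref{thm:sindy_jac} pointwise at each $x_0+s\,d$. The hypotheses hold everywhere on the segment, and domain and codomain agree because $\pi_{\mathrm{sym}}$ is action-valued (\Cref{assump:sindy}). This gives operator-norm error at most $\varepsilon_1+\varepsilon_2$ at every point, so the integrand has norm at most $(\varepsilon_1+\varepsilon_2)\|d\|$. Combining with $\|e(x_0)\|\le\delta$ through the triangle inequality yields $\delta+(\varepsilon_1+\varepsilon_2)\|d\|$.

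For the second sentence, only the boundary bound $\|\mathrm{D}e(x_0)\|\le\varepsilon$ is assumed, together with \Cref{assump:smooth,assump:feature_lip} on the convex $\Omega$ containing the segment. I split $\mathrm{D}e(x_0+s\,d)=\mathrm{D}e(x_0)+[\mathrm{D}e(x_0+s\,d)-\mathrm{D}e(x_0)]$. The bracket is bounded by $(L_\pi+M)s\|d\|=L_{\mathrm{err}}s\|d\|$ via the triangle inequality on the two Lipschitz conditions. Integrating $\varepsilon\|d\|+L_{\mathrm{err}}s\|d\|^2$ over $[0,1]$ gives the added term $L_{\mathrm{err}}\|d\|^2/2$. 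This is exactly \Cref{thm:ood_quad_nD}, with $\varepsilon$ instantiated as $\varepsilon_1+\varepsilon_2$ evaluated at $x_0$ when the symbolic decomposition is used.

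There is no real analytic obstacle. The main care points are bookkeeping: confirming the segment lies in $\Omega$ (convexity), that the pointwise bounds of \Cref{thm:sindy_jac} are used in operator norm so that they multiply $\|d\|$ correctly, and that the vector-valued FTC/integral norm inequality is justified by continuity of the derivatives.
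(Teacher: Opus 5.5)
Your proposal is correct and matches the paper's proof: the pointwise triangle inequality from \Cref{thm:sindy_jac} gives the uniform derivative bound $\varepsilon_1+\varepsilon_2$, and integrating along the segment then yields the linear bound. For the boundary-only case, the same bound at $x_0$ serves as $\varepsilon$ in the quadratic bound. The only difference is that you write out the integration arguments that the paper cites, namely \Cref{thm:ood_linear_nD} with $C_{\mathrm{der}}=\varepsilon_1+\varepsilon_2$, and \Cref{prop:composition} with \Cref{thm:ood_quad_nD}.
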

The linear result uses uniform segment-wide derivative accuracy; it does not require $L_\pi=0$. Its proof is in Appendix~\ref{app:proof_symbolic}. A replay-average loss alone supplies neither a uniform boundary certificate nor an OOD guarantee.

\subsection{Optional cross-environment filtering}
\label{sec:irm}

Let $e\in\mathcal{E}_{\mathrm{train}}$ index a training environment, where $N_{\mathrm{env}}=|\mathcal{E}_{\mathrm{train}}|$. Write $\pi_{\mathrm{ref}}^e$ for its action-valued reference. IRM~\cite{Arjovsky2019IRM} motivates looking for relations shared across environments. The current code selects symbolic models using variance of \emph{prediction errors}; it does not implement the gradient-penalty formulation of IRM (IRMv1) in this selector. Appendix~\ref{app:irm_protocol} gives the exact criterion in \eqref{eq:invariance_selector}.

\begin{definition}[Cross-environment derivative accuracy]
\label{def:irm}
A shared action-valued reference $\pi_{\mathrm{sym}}$ is
$\varepsilon_{\mathrm{irm}}$-accurate when
$\|\mathrm{D}\pi_{\mathrm{ref}}^e(x)-\mathrm{D}\pi_{\mathrm{sym}}(x)\|
\leq\varepsilon_{\mathrm{irm}}$ on every stated environment domain.
\end{definition}

\begin{proposition}[Conditional tightening]
\label{thm:irm_tighten}
If a selected reference satisfies
$\varepsilon_{\mathrm{irm}}\leq\varepsilon_{\mathrm{raw}}$,
where $\varepsilon_{\mathrm{raw}}$ is the corresponding single-environment derivative-error bound, adding the same policy-alignment error $\varepsilon_2$ preserves this ordering. Selection by error variance alone does not imply the premise.
\end{proposition}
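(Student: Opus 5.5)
The plan is to read the proposition as two claims and prove them separately: a monotonicity statement obtained from \Cref{thm:sindy_jac}, and a non-implication shown by counterexample.

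\textbf{Ordering claim.} Fix an environment $e$ and its reference $\pi_{\mathrm{ref}}^e$.

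\begin{enumerate}[label=(\roman*),leftmargin=2em]
\item Apply \Cref{thm:sindy_jac} to the selected reference $\pi_{\mathrm{sym}}$, taking $\varepsilon_1=\varepsilon_{\mathrm{irm}}$ from \Cref{def:irm}. Together with the alignment bound $\|\mathrm{D}\pi-\mathrm{D}\pi_{\mathrm{sym}}\|\leq\varepsilon_2$, this gives $\|\mathrm{D}\pi-\mathrm{D}\pi_{\mathrm{ref}}^e\|\leq\varepsilon_{\mathrm{irm}}+\varepsilon_2$.
\item Apply \Cref{thm:sindy_jac} again to the single-environment reference, taking $\varepsilon_1=\varepsilon_{\mathrm{raw}}$. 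Under the same alignment error this gives the bound $\varepsilon_{\mathrm{raw}}+\varepsilon_2$.
\item Since $t\mapsto t+\varepsilon_2$ is monotone, $\varepsilon_{\mathrm{irm}}\leq\varepsilon_{\mathrm{raw}}$ implies $\varepsilon_{\mathrm{irm}}+\varepsilon_2\leq\varepsilon_{\mathrm{raw}}+\varepsilon_2$.
\end{enumerate}

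Optionally, substituting these composite bounds into \eqref{eq:symbolic_linear} of \Cref{thm:sindy_pipeline} shows that the ordering carries over to the OOD output bound $\delta+(\cdot)\|d\|$.

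The point needing care is that both composites must be formed on the same domain and segment, with the same $\varepsilon_2$. I will state this explicitly as the meaning of ``the same policy-alignment error'', because \Cref{thm:sindy_jac} requires matching domains and codomains.

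\textbf{Non-implication claim.} The aim is to exhibit references that the variance selector would favor but that violate $\varepsilon_{\mathrm{irm}}\leq\varepsilon_{\mathrm{raw}}$. The selector criterion in \eqref{eq:invariance_selector} depends only on the variance of prediction errors across environments. I will use two scalar candidates on $\Omega=\mathbb{R}$.

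\begin{enumerate}[label=(\roman*),leftmargin=2em]
\item Let the true references $\pi_{\mathrm{ref}}^e$ share a derivative $k$ across environments.
\item Let the first candidate have derivative $k+c$ with $c$ large. Its prediction errors $c\,x$ are identical in every environment, provided the environment state distributions coincide in the relevant moments. Their cross-environment variance is therefore zero, so this candidate is selected.
\item Let the second candidate be a single-environment fit with derivative error $\varepsilon_{\mathrm{raw}}<c$. Its errors vary slightly across environments, so its variance is positive.
\end{enumerate}

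The selected candidate then has $\varepsilon_{\mathrm{irm}}=c>\varepsilon_{\mathrm{raw}}$, so the premise fails.

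The main obstacle is making the counterexample respect the exact form of \eqref{eq:invariance_selector}, which is not yet visible here. I will first try to make the bias an environment-independent constant offset. Such an offset adds equally to every environment's error, so its variance is zero under any variance-of-errors criterion, while a derivative error can be introduced separately. If that fails, I will fall back on arbitrarily large constant-across-environment derivative errors and state that this case suffices.
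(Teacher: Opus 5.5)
Your ordering argument is the paper's proof. The paper simply adds $\varepsilon_2$ to both sides of $\varepsilon_{\mathrm{irm}}\leq\varepsilon_{\mathrm{raw}}$. Your steps (i)--(ii) via \Cref{thm:sindy_jac} only make explicit which composite bounds are being compared, and you correctly flag the shared $\varepsilon_2$ as an assumption.

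The paper does not prove the second sentence. It rests only on the remark in Appendix~\ref{app:irm_protocol} that \eqref{eq:invariance_selector} ``can favor a model with uniformly large prediction error.'' Your counterexample is therefore an addition to the paper.

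With the criterion now visible (variance over $e$ of the mean Euclidean error $\widehat E_e$), your slope-$(k+c)$ candidate works. If the collected states in every environment have the same mean of $|x|$, then $\widehat E_e=|c|\,\mathbb{E}_e|x|$ is constant in $e$, so its variance is zero. Three repairs are needed.

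\textbf{Drop the constant-offset route.} A common additive shift leaves the cross-environment variance unchanged, so it cannot cancel environment dependence coming from the derivative error. Also, inside a Euclidean norm, $\|b+g(x)\|$ does not split as $\|b\|+\|g(x)\|$, so the offset does not decouple from the derivative-induced part $g$. Environment-independence must come from $g$ and the state distributions, which is exactly your fallback.

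\textbf{Justify that the competing candidate has strictly positive variance.} A purely linear error $\varepsilon_{\mathrm{raw}}x$ would give $\widehat E_e=\varepsilon_{\mathrm{raw}}\mathbb{E}_e|x|$, which is also constant under your moment matching. A concrete fix:
\begin{itemize}
\item Take all references equal to $kx$, and the second candidate $kx+\varepsilon_{\mathrm{raw}}\sin x$; its derivative error is exactly $\varepsilon_{\mathrm{raw}}$.
\item Put one environment's states at $\pm\pi/2$. Split another's evenly between $0$ and $\pm\pi$.
\item Both environments have mean $|x|$ equal to $\pi/2$, but $\mathbb{E}_e|\sin x|$ is $1$ versus $0$, so the second candidate's variance is positive.
\end{itemize}
Alternatively, note that the $\in\arg\min$ in \eqref{eq:invariance_selector} already permits selecting the bad candidate on a tie.

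\textbf{State the toy identification explicitly.} You are treating the selector's dynamics candidates as the action-valued references themselves, for example $d_x=d_a=1$, $\mathcal C$ the identity, and regression targets equal to $\pi_{\mathrm{ref}}^e$. This matters because in the paper $\widehat E_e$ scores $\Xi$ on state increments, while $\varepsilon_{\mathrm{irm}}$ concerns $\mathcal C(f_{\mathrm{sym}})$.
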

\begin{proof}
Add $\varepsilon_2$ to both sides of the assumed inequality.
\end{proof}

\begin{proposition}[Invariant-component decomposition]
\label{thm:irm_causal}
Suppose
$\pi_{\mathrm{ref}}^e=\pi_{\mathrm{inv}}+\pi_{\mathrm{var}}^e$,
where $\pi_{\mathrm{inv}}$ is the environment-invariant component and
$\pi_{\mathrm{var}}^e$ is an environment-dependent residual.
Assume an invariant model $\pi_{\mathrm{sym}}^{\mathrm{inv}}$ approximates $\pi_{\mathrm{inv}}$ with derivative error at most $\varepsilon_{\mathrm{inv}}$ on the test domain.
Also assume a single-environment model $\pi_{\mathrm{sym}}^{\mathrm{single}}$ approximates
$\pi_{\mathrm{inv}}+\pi_{\mathrm{var}}^{e_0}$ with the same error bound on that test domain, where $e_0$ is its training environment.
At test environment $e_\star$, the respective upper bounds are
\[
\varepsilon_{\mathrm{inv}}+\varepsilon_{\mathrm{var}}^{e_\star}
\quad\text{and}\quad
\varepsilon_{\mathrm{inv}}+
\varepsilon_{\mathrm{var}}^{e_0}+
\varepsilon_{\mathrm{var}}^{e_\star},
\]
where $\varepsilon_{\mathrm{var}}^e$ bounds
$\|\mathrm{D}\pi_{\mathrm{var}}^e(x)\|$ on the test domain.
\end{proposition}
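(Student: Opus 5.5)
The plan is to prove both bounds by inserting the invariant component and applying the triangle inequality pointwise on the test domain, in the same spirit as \Cref{thm:sindy_jac}. Throughout, fix a test point $x$ in the domain and write every derivative as a Fr\'echet derivative with values in the operator-norm space of linear maps $E\to F$. All the maps involved share the domain $\Omega$ and the codomain $F$, so the sums and differences below are well defined.

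\textbf{Invariant model.} By linearity of $\mathrm{D}$, the decomposition gives $\mathrm{D}\pi_{\mathrm{ref}}^{e_\star}=\mathrm{D}\pi_{\mathrm{inv}}+\mathrm{D}\pi_{\mathrm{var}}^{e_\star}$. Hence
\[
\|\mathrm{D}\pi_{\mathrm{ref}}^{e_\star}(x)-\mathrm{D}\pi_{\mathrm{sym}}^{\mathrm{inv}}(x)\|
\leq\|\mathrm{D}\pi_{\mathrm{inv}}(x)-\mathrm{D}\pi_{\mathrm{sym}}^{\mathrm{inv}}(x)\|+\|\mathrm{D}\pi_{\mathrm{var}}^{e_\star}(x)\|.
\]
The first term is at most $\varepsilon_{\mathrm{inv}}$ by assumption, and the second is at most $\varepsilon_{\mathrm{var}}^{e_\star}$ by definition. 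This yields the first displayed bound.

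\textbf{Single-environment model.} Write
\[
\mathrm{D}\pi_{\mathrm{ref}}^{e_\star}-\mathrm{D}\pi_{\mathrm{sym}}^{\mathrm{single}}
=\bigl(\mathrm{D}\pi_{\mathrm{inv}}+\mathrm{D}\pi_{\mathrm{var}}^{e_0}-\mathrm{D}\pi_{\mathrm{sym}}^{\mathrm{single}}\bigr)-\mathrm{D}\pi_{\mathrm{var}}^{e_0}+\mathrm{D}\pi_{\mathrm{var}}^{e_\star}.
\]
Apply the triangle inequality to the three pieces. The first piece is bounded by $\varepsilon_{\mathrm{inv}}$, by the stated approximation assumption on the test domain. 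The second and third are bounded by $\varepsilon_{\mathrm{var}}^{e_0}$ and $\varepsilon_{\mathrm{var}}^{e_\star}$ respectively. This gives $\varepsilon_{\mathrm{inv}}+\varepsilon_{\mathrm{var}}^{e_0}+\varepsilon_{\mathrm{var}}^{e_\star}$.

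The main point needing care is bookkeeping rather than analysis. The bound $\varepsilon_{\mathrm{var}}^{e_0}$ must hold on the \emph{test} domain, not only on $e_0$'s training domain, and the statement supplies exactly this. Both bounds are upper bounds only. I will remark that the proposition does not assert the single-environment error is actually larger; it asserts only that its certified bound carries the extra term $\varepsilon_{\mathrm{var}}^{e_0}$. This parallels the caveat in \Cref{thm:irm_tighten}.
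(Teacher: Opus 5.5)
Your proposal is correct and matches the paper's proof step for step. You use the same decomposition of $\mathrm{D}\pi_{\mathrm{ref}}^{e_\star}-\mathrm{D}\pi_{\mathrm{sym}}^{\mathrm{inv}}$ for the first bound, the same insertion of the $e_0$ reference $\pi_{\mathrm{inv}}+\pi_{\mathrm{var}}^{e_0}$ with a pointwise triangle inequality for the second, and the same closing caveat that these are only upper bounds rather than a lower bound on the realized benefit.
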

The difference between these upper bounds is not a lower bound on the realized benefit. The proof and that distinction appear in Appendix~\ref{app:proof_irm}; identifying the invariant component remains an assumption.

\begin{corollary}[Conditional filtering-to-OOD pipeline]
\label{thm:irm_pipeline}
If \Cref{def:irm} and policy alignment with error $\varepsilon_2$ hold on the whole test segment in environment $e_\star$, then
$\|\pi(x_{\mathrm{ood}})-\pi_{\mathrm{ref}}^{e_\star}(x_{\mathrm{ood}})\|
\leq\delta+(\varepsilon_{\mathrm{irm}}+\varepsilon_2)\|d\|$,
with base error $\delta$ at $x_0$ in the same environment.
\end{corollary}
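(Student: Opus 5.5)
The plan is to reduce \Cref{thm:irm_pipeline} to the same mean-value argument that underlies \Cref{thm:sindy_pipeline}, with the reference $\pi_{\mathrm{ref}}^{e_\star}$ of the test environment in place of $\pi_{\mathrm{ref}}$ and $\varepsilon_{\mathrm{irm}}$ in place of $\varepsilon_1$. Along the segment $\gamma(s)=x_0+sd$, $s\in[0,1]$, I first obtain a uniform derivative-error bound. By \Cref{def:irm} applied to $e_\star$, every point of the segment satisfies $\|\mathrm{D}\pi_{\mathrm{ref}}^{e_\star}(x)-\mathrm{D}\pi_{\mathrm{sym}}(x)\|\leq\varepsilon_{\mathrm{irm}}$. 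The policy-alignment hypothesis gives $\|\mathrm{D}\pi(x)-\mathrm{D}\pi_{\mathrm{sym}}(x)\|\leq\varepsilon_2$ at the same points. \Cref{thm:sindy_jac} then yields, pointwise on the segment,
\[
\|\mathrm{D}\pi(x)-\mathrm{D}\pi_{\mathrm{ref}}^{e_\star}(x)\|\leq\varepsilon_{\mathrm{irm}}+\varepsilon_2 .
\]
Applying that theorem requires the domain and both codomains to agree. Here the domain is the test segment in $e_\star$ and the codomain is the action space $F$. The hypotheses of the corollary, together with \Cref{def:irm}'s requirement that the shared reference be action-valued, ensure this agreement.

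Next, I integrate along the segment. Set $g(s)=\pi(\gamma(s))-\pi_{\mathrm{ref}}^{e_\star}(\gamma(s))$. Continuous differentiability of both maps near the segment, in the sense of \Cref{assump:reference} transferred to $e_\star$, lets me apply the fundamental theorem of calculus:
\[
g(1)-g(0)=\int_0^1\bigl(\mathrm{D}\pi(\gamma(s))-\mathrm{D}\pi_{\mathrm{ref}}^{e_\star}(\gamma(s))\bigr)d\,\mathrm ds .
\]
Taking norms and using the induced operator norm bounds this difference by $(\varepsilon_{\mathrm{irm}}+\varepsilon_2)\|d\|$. The triangle inequality together with the base error $\|g(0)\|\leq\delta$ at $x_0$ in environment $e_\star$ gives the claimed bound.

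Alternatively, once the uniform derivative bound is established, the corollary is simply \Cref{thm:sindy_pipeline} with $\pi_{\mathrm{ref}}:=\pi_{\mathrm{ref}}^{e_\star}$ and $\varepsilon_1:=\varepsilon_{\mathrm{irm}}$, so I can cite it directly rather than repeat the integral. The main obstacle is not analytic but bookkeeping. \Cref{def:irm} must hold on the whole segment, including the OOD endpoint $x_{\mathrm{ood}}$, and not only on training-environment domains. The hypothesis "on the whole test segment in environment $e_\star$" must be read as providing exactly this. I would state this explicitly, and also note that no regularity constant $L_\pi$ or $M$ enters, because accuracy is uniform along the segment.
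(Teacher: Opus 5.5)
Your proposal is correct and matches the paper's proof. In environment $e_\star$ you combine $\varepsilon_{\mathrm{irm}}$ and $\varepsilon_2$ by the triangle inequality into a uniform segment-wide derivative bound, then apply \Cref{thm:ood_linear_nD} with base error $\delta$; its proof is exactly your integration step, and the paper uses $\zeta(\upsilon)$ for the segment because $\gamma$ already denotes the discount factor.
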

Appendix~\ref{app:proof_irm} supplies the proof. The environment-diversity requirement depends on the model class and interventions; IRM does not imply a universal three-environment sufficiency rule.

\subsection{Frozen Bellman operator}
\label{sec:operator}

Fix a stochastic actor density $p_\theta$ and define
\begin{equation}
V^Q(x)=\int_{\mathcal{A}}p_\theta(a\mid x)
\bigl(Q(x,a)-\alpha\log p_\theta(a\mid x)\bigr)\,\mathrm d a.
\label{eq:soft_value}
\end{equation}
Assume its entropy contribution and all frozen reward and penalty terms are bounded. For mode $\ell$, transition kernel $P_\ell$ and reward $R_\ell$, define
\begin{equation}
\begin{split}
\mathcal{T}^{\mathrm{CS}}_\ell Q(x,a)=R_\ell(x,a)+\gamma\bigg[
&\int_{\mathcal{S}}V^Q(x')P_\ell(dx'\mid x,a)\\
&-\lambda_{\mathrm{epi}}\Gamma_{\mathrm{epi}}^\ell(x,a)
-\lambda_{\mathrm{sym}}\Gamma_{\mathrm{sym}}^\ell(x,a)
-\kappa_{\mathrm{ale}}^\ell(x,a)\bigg].
\end{split}
\label{eq:csbapr_mode}
\end{equation}
Here $\lambda_{\mathrm{epi}},\lambda_{\mathrm{sym}}\geq0$ are penalty coefficients;
$\Gamma_{\mathrm{epi}}^\ell$ is a fixed ensemble uncertainty penalty,
$\Gamma_{\mathrm{sym}}^\ell$ a fixed symbolic discrepancy,
and $\kappa_{\mathrm{ale}}^\ell$ a fixed signed aleatoric offset.
The mixture is
\begin{equation}
\mathcal{T}^{\mathrm{CS}}_\rho Q
=\sum_{\ell\in\mathcal{M}}\rho(\ell)\mathcal{T}^{\mathrm{CS}}_\ell Q.
\label{eq:csbapr_operator}
\end{equation}

\begin{theorem}[Frozen-operator contraction]
\label{thm:csbapr_contraction}
For stochastic transition kernels, a fixed probability vector $\rho$, fixed $p_\theta,\alpha$, and fixed bounded penalty fields, the operator \eqref{eq:csbapr_operator}, built from the soft value \eqref{eq:soft_value} and mode backups \eqref{eq:csbapr_mode}, is a $\gamma$-contraction on bounded value functions in the supremum norm, with a unique fixed point.
\end{theorem}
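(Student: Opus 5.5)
The plan is the standard Banach fixed-point argument. First I check that $\mathcal{T}^{\mathrm{CS}}_\rho$ maps the space $\mathcal{B}$ of bounded measurable functions on $\mathcal{S}\times\mathcal{A}$ into itself. For bounded $Q$, the soft value \eqref{eq:soft_value} satisfies
\[
|V^Q(x)|\leq\|Q\|_\infty+\alpha\sup_x|\mathcal{H}(p_\theta(\cdot\mid x))|,
\]
which is finite by the assumed boundedness of the entropy contribution. Integrating against a stochastic kernel $P_\ell$ preserves this bound. Adding the bounded frozen fields $R_\ell$, $\Gamma_{\mathrm{epi}}^\ell$, $\Gamma_{\mathrm{sym}}^\ell$ and $\kappa_{\mathrm{ale}}^\ell$ then gives a bounded mode backup \eqref{eq:csbapr_mode}. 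A convex combination over the finite set $\mathcal{M}$ remains bounded. Measurability follows from the standard properties of kernels and integrals, which I will take for granted.

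Next comes the Lipschitz estimate. Take $Q_1,Q_2\in\mathcal{B}$. In $V^{Q_1}-V^{Q_2}$ the entropy terms are identical, because $p_\theta$ and $\alpha$ are frozen, so they cancel. What remains is
\[
V^{Q_1}(x)-V^{Q_2}(x)=\int_{\mathcal{A}}p_\theta(a\mid x)\bigl(Q_1-Q_2\bigr)(x,a)\,\mathrm da.
\]
Since $p_\theta(\cdot\mid x)$ is a probability density, $\|V^{Q_1}-V^{Q_2}\|_\infty\leq\|Q_1-Q_2\|_\infty$. In the mode backup, $R_\ell$ and the three penalty fields do not depend on $Q$, so they also cancel. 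This leaves
\[
|\mathcal{T}^{\mathrm{CS}}_\ell Q_1(x,a)-\mathcal{T}^{\mathrm{CS}}_\ell Q_2(x,a)|=\gamma\left|\int_{\mathcal{S}}\bigl(V^{Q_1}-V^{Q_2}\bigr)(x')\,P_\ell(dx'\mid x,a)\right|\leq\gamma\|Q_1-Q_2\|_\infty,
\]
where the inequality uses that $P_\ell(\cdot\mid x,a)$ has total mass one. For the mixture, since $\rho(\ell)\geq0$ and $\sum_\ell\rho(\ell)=1$, the triangle inequality gives $\|\mathcal{T}^{\mathrm{CS}}_\rho Q_1-\mathcal{T}^{\mathrm{CS}}_\rho Q_2\|_\infty\leq\gamma\|Q_1-Q_2\|_\infty$.

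Finally, $(\mathcal{B},\|\cdot\|_\infty)$ is complete. Because $0\leq\gamma<1$, the Banach fixed-point theorem yields a unique fixed point in $\mathcal{B}$. Uniqueness is also immediate from the contraction: two fixed points $Q^\ast,Q^{\ast\ast}$ would satisfy $\|Q^\ast-Q^{\ast\ast}\|_\infty\leq\gamma\|Q^\ast-Q^{\ast\ast}\|_\infty$, forcing them to be equal.

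I expect the main obstacle to be well-posedness rather than the estimate itself. The point to stress is that the penalties are frozen fields, independent of $Q$. If $\Gamma_{\mathrm{epi}}^\ell$ or $\Gamma_{\mathrm{sym}}^\ell$ were recomputed from the value function being backed up, they would not cancel. This is exactly the failure mode exhibited by the counterexample in \Cref{thm:frozen_sym_necessary}, so the frozen-field assumption is what the contraction depends on. The second technical point is that $V^Q$ must be well defined, which requires the integrability of $p_\theta\log p_\theta$. That is covered by the bounded-entropy assumption, and no further regularity is needed.
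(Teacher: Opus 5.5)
Your proof is correct and follows the paper's argument: the frozen entropy, reward, and penalty terms cancel, and $|V^{Q_1}-V^{Q_2}|\leq\|Q_1-Q_2\|_\infty$ because $p_\theta(\cdot\mid x)$ is a probability density. The bound passes through the probability kernels $P_\ell$ and the convex weights $\rho(\ell)$, and the Banach fixed-point theorem on the complete space of bounded measurable functions concludes the proof; your explicit self-map estimate via the entropy term just spells out the paper's one-line boundedness remark.
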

\begin{proof}[Proof sketch]
The fixed entropy and penalties cancel between backups.
$|V^{Q_1}(x)-V^{Q_2}(x)|\leq\|Q_1-Q_2\|_\infty$,
and averaging with probability kernels and $\rho$ preserves the bound.
Appendix~\ref{app:proof_contraction} gives the full argument.
\end{proof}

\begin{corollary}[Contraction after belief update]
\label{cor:belief_update}
For nonnegative mode likelihoods $\mathscr{L}(\ell)$ and
$Z_\rho=\sum_\ell\rho(\ell)\mathscr{L}(\ell)>0$,
the posterior
$\rho^+(\ell)=\rho(\ell)\mathscr{L}(\ell)/Z_\rho$
defines another frozen $\gamma$-contractive operator.
\end{corollary}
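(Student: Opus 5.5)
The plan is to reduce the statement to \Cref{thm:csbapr_contraction}. That theorem only needs $\rho$ to be a fixed probability vector on $\mathcal{M}$, together with the frozen quantities $p_\theta$, $\alpha$, and the bounded penalty fields. The whole task is therefore to check that $\rho^+$ is again a probability vector. Once that holds, the same argument applies verbatim to $\mathcal{T}^{\mathrm{CS}}_{\rho^+}$, built from the unchanged mode backups \eqref{eq:csbapr_mode}.

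The check proceeds in three steps.
\begin{enumerate}
\item \emph{Nonnegativity.} Since $\rho(\ell)\geq0$ and $\mathscr{L}(\ell)\geq0$, every numerator $\rho(\ell)\mathscr{L}(\ell)$ is nonnegative. Because $Z_\rho>0$, the quotient $\rho^+(\ell)$ is well defined and nonnegative.
\item \emph{Normalization.} Summing over the finite set $\mathcal{M}$ gives
\[
\sum_\ell\rho^+(\ell)=Z_\rho^{-1}\sum_\ell\rho(\ell)\mathscr{L}(\ell)=1
\]
by the definition of $Z_\rho$. So $\rho^+$ is a probability vector on $\mathcal{M}$.
\item \emph{Frozen data.} The actor density, temperature, kernels $P_\ell$, rewards $R_\ell$, and penalty fields $\Gamma_{\mathrm{epi}}^\ell,\Gamma_{\mathrm{sym}}^\ell,\kappa_{\mathrm{ale}}^\ell$ are unaffected by the belief update. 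Only the mixture weights change.
\end{enumerate}

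With these in hand, \Cref{thm:csbapr_contraction} gives the conclusion directly. For completeness, the key inequality can be restated. For bounded $Q_1,Q_2$,
\[
\bigl|\mathcal{T}^{\mathrm{CS}}_{\rho^+}Q_1-\mathcal{T}^{\mathrm{CS}}_{\rho^+}Q_2\bigr|(x,a)\leq\sum_\ell\rho^+(\ell)\,\gamma\|Q_1-Q_2\|_\infty=\gamma\|Q_1-Q_2\|_\infty .
\]
The last equality uses exactly the normalization from step 2. Uniqueness of the fixed point then follows from Banach's theorem on the complete space of bounded functions.

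There is no real obstacle. The only point needing care is the role of the hypothesis $Z_\rho>0$, which excludes the degenerate case where every mode with positive prior mass has zero likelihood. In that case no posterior exists, so the hypothesis cannot be dropped. I would also note that contraction holds separately for each frozen belief. It says nothing about convergence of the iterates when $\rho$ is updated between backups.
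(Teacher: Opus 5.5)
Your proposal is correct and matches the paper's proof: check that $\rho^+$ is nonnegative and sums to $Z_\rho/Z_\rho=1$, then apply \Cref{thm:csbapr_contraction} with $\rho^+$ held fixed. Your closing caveat also matches the paper's remark that the result concerns the new frozen operator, not convergence when the belief changes between backups.
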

This is a per-operator property; changing the actor or belief between updates changes the operator and possibly its fixed point. Appendix~\ref{app:proof_belief} proves the probability normalization.

\begin{theorem}[Value-dependent penalty counterexample]
\label{thm:frozen_sym_necessary}
On a one-state, one-action MDP with $0<\gamma<1$, let
$\Gamma_{\mathrm{sym}}(Q)=C_{\mathrm{sym}}-\nu Q$,
where $C_{\mathrm{sym}}$ is a constant and $\nu\geq0$ is coupling sensitivity.
The affine backup has Lipschitz factor
$\gamma(1+\lambda_{\mathrm{sym}}\nu)$.
It fails to contract when this factor is at least one; below that threshold it remains contractive.
\end{theorem}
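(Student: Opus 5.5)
We work directly on the one-state, one-action MDP. Bounded value functions are then real numbers $Q$, and the supremum norm is the absolute value. The policy density is a point mass on the single action, so the soft value \eqref{eq:soft_value} reduces to $V^Q=Q-\alpha h$ for a fixed scalar entropy contribution $h$. Alternatively, the entropy term can be absorbed into a constant $c_0$, which is all the argument needs. With a single mode, or any belief $\rho$, the mixture \eqref{eq:csbapr_operator} collapses to one affine map. Substituting $\Gamma_{\mathrm{sym}}(Q)=C_{\mathrm{sym}}-\nu Q$ into \eqref{eq:csbapr_mode} gives
\[
\mathcal{T}Q=R+\gamma\bigl[Q+c_0-\lambda_{\mathrm{epi}}\Gamma_{\mathrm{epi}}-\lambda_{\mathrm{sym}}(C_{\mathrm{sym}}-\nu Q)-\kappa_{\mathrm{ale}}\bigr]
=\gamma(1+\lambda_{\mathrm{sym}}\nu)Q+b,
\]
where $b$ collects every $Q$-independent term.

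The Lipschitz factor then follows from a one-line computation, $|\mathcal{T}Q_1-\mathcal{T}Q_2|=\gamma(1+\lambda_{\mathrm{sym}}\nu)|Q_1-Q_2|$. This holds with equality, so $\kappa:=\gamma(1+\lambda_{\mathrm{sym}}\nu)$ is the exact Lipschitz constant and not merely an upper bound. Exactness is the key point, because it gives both directions at once.

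If $\kappa<1$, the map is a $\kappa$-contraction on $\mathbb{R}$. Banach's fixed-point theorem, or directly $Q^\ast=b/(1-\kappa)$, then gives a unique fixed point. If $\kappa\ge1$, the map is not a contraction, since no constant below one bounds the difference ratio. To make the failure concrete, consider the two subcases. When $\kappa>1$, iterates diverge from any starting point other than $b/(1-\kappa)$. When $\kappa=1$, the map is a translation by $b$: it has no fixed point if $b\neq0$ and infinitely many if $b=0$, so uniqueness fails in either event.

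The only delicate step is the reduction of the soft value to the affine form. It requires a point-mass or fixed-density convention for the single action in which the entropy term is a finite constant independent of $Q$. I would state that convention explicitly, consistent with the theorem's hypothesis that the entropy contribution is bounded. Beyond that, the argument is elementary, and we note that $\nu=0$ recovers the $\gamma$-contraction of \Cref{thm:csbapr_contraction}.
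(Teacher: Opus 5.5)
Your proof is correct and follows the paper's argument: reduce the backup to the scalar affine map $Q\mapsto C_0+\gamma(1+\lambda_{\mathrm{sym}}\nu)Q$, observe that its difference ratio is exactly $\gamma(1+\lambda_{\mathrm{sym}}\nu)$, and conclude that it contracts precisely when this factor is below one. The paper treats the single-action entropy as zero, and it adds only a side remark that a large $C_{\mathrm{sym}}$ keeps $C_{\mathrm{sym}}-\nu Q\geq0$ on a bounded comparison interval, which the statement as posed does not require.
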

A constant can make the discrepancy nonnegative on the bounded comparison interval. Appendix~\ref{app:proof_feedback} gives the exact calculation. Freezing is a sufficient way to recover the $\gamma$ template; the counterexample does not prove that every non-frozen design fails.

The abstract operator retains separate frozen penalties, following BAPR's proof template. In the shared-critic implementation, epistemic spread is penalized in the critic and actor losses; the symbolic term is a detached batch average of \eqref{eq:projected_loss}. Appendix~\ref{app:operator_detail} distinguishes these update equations from the abstract operator.

\subsection{Boundary errors and whole-policy regularity}
\label{sec:composition}

\begin{proposition}[Derivative-error propagation]
\label{prop:composition}
Under \Cref{assump:reference,assump:smooth,assump:feature_lip}, for $x\in\Omega$,
\begin{equation}
\|\mathrm{D}\pi(x)-\mathrm{D}\pi_{\mathrm{ref}}(x)\|
\leq\varepsilon+L_{\mathrm{err}}\|x-x_0\|,
\qquad L_{\mathrm{err}}=L_\pi+M.
\label{eq:deriv_error_propagation}
\end{equation}
\end{proposition}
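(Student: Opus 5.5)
The plan is a single triangle-inequality decomposition at the reference point $x_0$. For $x\in\Omega$, write the derivative error $\mathrm{D}\pi(x)-\mathrm{D}\pi_{\mathrm{ref}}(x)$ as the sum of three differences:
\[
\bigl[\mathrm{D}\pi(x)-\mathrm{D}\pi(x_0)\bigr]
+\bigl[\mathrm{D}\pi(x_0)-\mathrm{D}\pi_{\mathrm{ref}}(x_0)\bigr]
+\bigl[\mathrm{D}\pi_{\mathrm{ref}}(x_0)-\mathrm{D}\pi_{\mathrm{ref}}(x)\bigr].
\]
We then bound each term in the induced operator norm.

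The middle term is at most $\varepsilon$ by \Cref{assump:reference}. The first term is at most $L_\pi\|x-x_0\|$ by \Cref{assump:feature_lip}, applied to the pair $x,x_0\in\Omega$. The third term is at most $M\|x-x_0\|$ by \Cref{assump:smooth}. Because the operator norm satisfies the triangle inequality, summing the three bounds gives $\varepsilon+(L_\pi+M)\|x-x_0\|$. This is \eqref{eq:deriv_error_propagation} with $L_{\mathrm{err}}=L_\pi+M$.

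The only point needing care is bookkeeping rather than analysis. The Lipschitz assumptions are stated for pairs of points in $\Omega$, so we must check that $x_0\in\Omega$. The segment of interest lies in the convex set $\Omega$ and starts at $x_0$, so this holds. Both derivatives must also exist at $x$ and $x_0$, which \Cref{assump:reference} guarantees on a neighborhood of the segment. If one wants the statement for every $x\in\Omega$ rather than only points on the segment, one also needs differentiability on all of $\Omega$. This is implicit in \Cref{assump:smooth,assump:feature_lip}, since they refer to $\mathrm{D}\pi$ and $\mathrm{D}\pi_{\mathrm{ref}}$ at arbitrary points of $\Omega$.

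No integration or mean-value argument is needed at this stage. That step is reserved for \Cref{thm:ood_quad_nD}, which will integrate this linearly growing bound along the segment to produce the quadratic term.
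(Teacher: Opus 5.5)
Your proposal is correct and matches the paper's own proof: it inserts $\mathrm D\pi(x_0)$ and $\mathrm D\pi_{\mathrm{ref}}(x_0)$, applies the triangle inequality, and bounds the three terms by $L_\pi\|x-x_0\|$, $\varepsilon$, and $M\|x-x_0\|$. Your checks that $x_0\in\Omega$ and that both derivatives exist on $\Omega$ are valid bookkeeping that the paper leaves implicit.
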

\begin{proof}
Insert both derivatives at $x_0$ and apply the triangle inequality and the two derivative Lipschitz bounds. Appendix~\ref{app:component_lemma} spells out the three terms.
\end{proof}
The smoothness premise covers the OOD segment. This removes the need to assume OOD Jacobian alignment, but it does not remove all OOD-domain assumptions. Applying a constant worst-case segment bound gives the conservative expression
$\delta+\varepsilon\|d\|+L_{\mathrm{err}}\|d\|^2$;
integrating the increasing envelope gives the tighter factor $1/2$ below.

\subsection{Policy-output extrapolation bound}
\label{sec:main_thm}

All maps in this subsection are continuously differentiable on a neighborhood of the relevant closed segment, and all error-bound constants are nonnegative. The arguments use the mean value theorem (MVT) and integration of a derivative envelope.

\begin{lemma}[Scalar linear error growth]
\label{lem:error_linear}
For scalar policy and reference on an interval $[b_-,b_+]$ with $b_-\leq b_+$, if
$|\pi'(x)-\pi_{\mathrm{ref}}'(x)|\leq C_{\mathrm{der}}$
throughout the interval, then the change in $\pi-\pi_{\mathrm{ref}}$ between endpoints is at most
$C_{\mathrm{der}}(b_+-b_-)$.
Here $b_-,b_+$ are interval endpoints and $C_{\mathrm{der}}$ is a uniform derivative-error bound.
\end{lemma}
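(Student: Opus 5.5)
The plan is to apply the mean value theorem to the scalar error function $e(x)=\pi(x)-\pi_{\mathrm{ref}}(x)$ on $[b_-,b_+]$. By the standing convention of this subsection, both maps are continuously differentiable on a neighborhood of the closed interval. Hence $e$ is continuous on $[b_-,b_+]$ and differentiable on its interior, with $e'=\pi'-\pi_{\mathrm{ref}}'$. The quantity to bound is the endpoint change $|e(b_+)-e(b_-)|$.

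First I will dispose of the degenerate case $b_-=b_+$. There the change is zero and the bound $C_{\mathrm{der}}\cdot 0=0$ holds trivially. For $b_-<b_+$, the scalar MVT gives a point $\xi\in(b_-,b_+)$ with
\[
e(b_+)-e(b_-)=e'(\xi)(b_+-b_-)=\bigl(\pi'(\xi)-\pi_{\mathrm{ref}}'(\xi)\bigr)(b_+-b_-).
\]
Taking absolute values and using the hypothesis $|\pi'(\xi)-\pi_{\mathrm{ref}}'(\xi)|\leq C_{\mathrm{der}}$ yields $|e(b_+)-e(b_-)|\leq C_{\mathrm{der}}(b_+-b_-)$, since $b_+-b_-\geq0$.

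An alternative route, useful for the vector-valued extension later, writes $e(b_+)-e(b_-)=\int_{b_-}^{b_+}e'(s)\,\mathrm ds$ by the fundamental theorem of calculus, which is valid because $e'$ is continuous. One then bounds the integrand pointwise. I would mention this route because the subsequent theorem integrates the increasing envelope from \Cref{prop:composition}, and the scalar MVT does not extend verbatim to $F$-valued maps.

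I expect no real obstacle. The only points needing care are that the MVT is applied to the difference $e$ rather than to each map separately, and that the hypothesis holds at the interior point $\xi$ because it is assumed throughout the interval.
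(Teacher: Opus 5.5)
Your proof is correct, but your main argument takes a different route from the paper's. The paper proves the lemma in one line with the fundamental theorem of calculus: it writes $e_{\mathrm{out}}(b_+)-e_{\mathrm{out}}(b_-)=\int_{b_-}^{b_+}e_{\mathrm{out}}'(x)\,\mathrm dx$ and bounds the integrand pointwise by $C_{\mathrm{der}}$. That is exactly your ``alternative route.''

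Your mean value theorem argument needs slightly weaker hypotheses: differentiability on the open interval and continuity on the closed one suffice, and no integrability of $e'$ is needed. The cost is a separate degenerate case.

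The integral form is the one the paper actually reuses. It extends verbatim to the vector-valued \Cref{thm:ood_linear_nD}. More importantly, it integrates a non-constant envelope in \Cref{thm:ood_1d} and \Cref{lem:path_fencing}. There the mean value theorem gives only $|e'(\xi)|\leq\varepsilon+L_{\mathrm{err}}(b_+-b_-)$ at an unknown $\xi$. That yields the cruder bound $\varepsilon(b_+-b_-)+L_{\mathrm{err}}(b_+-b_-)^2$ without the factor $1/2$, which is the ``conservative expression'' noted after \Cref{prop:composition}.

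Keeping the integral version in reserve is therefore the right call.
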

Appendix~\ref{app:proof_linear} proves the scalar and vector-valued versions.

\begin{theorem}[Scalar quadratic error bound]
\label{thm:ood_1d}
If $|\pi(b_-)-\pi_{\mathrm{ref}}(b_-)|\leq\delta$ and
$|\pi'(x)-\pi_{\mathrm{ref}}'(x)|\leq\varepsilon+
L_{\mathrm{err}}(x-b_-)$ on $[b_-,b_+]$, then
$|\pi(b_+)-\pi_{\mathrm{ref}}(b_+)|
\leq\delta+\varepsilon(b_+-b_-)+
L_{\mathrm{err}}(b_+-b_-)^2/2$.
\end{theorem}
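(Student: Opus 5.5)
Set $e(x)=\pi(x)-\pi_{\mathrm{ref}}(x)$ on $[b_-,b_+]$. Under the standing hypothesis of this subsection, $e$ is continuously differentiable on a neighborhood of the closed interval, with $|e'(x)|\leq\varepsilon+L_{\mathrm{err}}(x-b_-)$ there. The argument has two steps. First, write $e(b_+)$ as $e(b_-)$ plus the integral of $e'$. Second, bound that integral by integrating the affine derivative envelope. The triangle inequality then handles the boundary value $|e(b_-)|\leq\delta$.

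Concretely, $e'$ is continuous on $[b_-,b_+]$, so the fundamental theorem of calculus gives
\[
e(b_+)=e(b_-)+\int_{b_-}^{b_+}e'(s)\,\mathrm ds .
\]
Taking absolute values, using $|\int g|\leq\int|g|$, and inserting the hypothesis yields
\[
|e(b_+)|\leq\delta+\int_{b_-}^{b_+}\bigl(\varepsilon+L_{\mathrm{err}}(s-b_-)\bigr)\,\mathrm ds
=\delta+\varepsilon(b_+-b_-)+\tfrac{L_{\mathrm{err}}}{2}(b_+-b_-)^2 .
\]
The degenerate case $b_-=b_+$ is trivial.

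The main subtlety is getting the factor $1/2$ rather than the conservative factor $1$. A single application of \Cref{lem:error_linear} with the worst-case constant $C_{\mathrm{der}}=\varepsilon+L_{\mathrm{err}}(b_+-b_-)$ gives only the weaker bound mentioned after \Cref{prop:composition}. To stay close to the paper's MVT framing while avoiding integrals, I would take an alternative route. Define
\[
g(x)=\delta+\varepsilon(x-b_-)+\tfrac{L_{\mathrm{err}}}{2}(x-b_-)^2 .
\]
Consider $h_\pm(x)=\pm e(x)-g(x)$. Each satisfies $h_\pm(b_-)\leq0$ and $h_\pm'(x)\leq|e'(x)|-g'(x)\leq0$. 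By the mean value theorem, $h_\pm$ is nonincreasing, so $h_\pm(b_+)\leq0$, which gives $|e(b_+)|\leq g(b_+)$. This comparison argument needs only differentiability of $e$, not continuity of $e'$. I would present it as the main proof and note the integral computation as the intuition.
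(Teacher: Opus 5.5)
Your proposal is correct. Your integral computation is exactly the paper's proof: set $e=\pi-\pi_{\mathrm{ref}}$, bound $|e(b_+)|$ by $|e(b_-)|+\int_{b_-}^{b_+}|e'(s)|\,\mathrm ds$, and integrate the affine envelope. It uses the subsection's standing $C^1$ hypothesis to justify the fundamental theorem of calculus.

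The barrier argument you want to make primary is a genuinely different route. Instead of integrating, you dominate $\pm e$ by the explicit majorant $g$ and get monotonicity of $h_\pm$ from the mean value theorem. What it buys is weaker regularity: differentiability of $e$ on $[b_-,b_+]$ suffices, so you never need to justify integrating $e'$. What the paper's route buys is brevity and a template it reuses almost verbatim in the vector-valued path-envelope lemma, via $\|e_{\mathrm{path}}(1)\|\leq\|e_{\mathrm{path}}(0)\|+\int_0^1\|e_{\mathrm{path}}'(\upsilon)\|\,\mathrm d\upsilon$.

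Your argument also extends to that vector case if you replace the sign $\pm$ by pairing with the unit vector $u=e_{\mathrm{path}}(1)/\|e_{\mathrm{path}}(1)\|$. Apply monotonicity to $\upsilon\mapsto\langle u,e_{\mathrm{path}}(\upsilon)\rangle-\delta-\varepsilon\|d\|\upsilon-\tfrac12L_{\mathrm{err}}\|d\|^2\upsilon^2$, using $\langle u,e_{\mathrm{path}}'\rangle\leq\|e_{\mathrm{path}}'\|$. Under the paper's standing $C^1$ assumption the extra generality is not needed, so either proof is complete.
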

The full proof is in Appendix~\ref{app:proof_scalar}.

\begin{theorem}[Vector-valued linear error bound]
\label{thm:ood_linear_nD}
If the base error is at most $\delta$ and
$\|\mathrm{D}\pi(x)-\mathrm{D}\pi_{\mathrm{ref}}(x)\|
\leq C_{\mathrm{der}}$ on the segment, then
$\|\pi(x_{\mathrm{ood}})-\pi_{\mathrm{ref}}(x_{\mathrm{ood}})\|
\leq\delta+C_{\mathrm{der}}\|d\|$.
\end{theorem}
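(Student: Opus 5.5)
The plan is to reduce the vector-valued claim to the segment and apply the fundamental theorem of calculus, or equivalently the mean value inequality, to the error map. Set $e(x)=\pi(x)-\pi_{\mathrm{ref}}(x)$, which takes values in $F$. It is continuously differentiable on a neighborhood of the closed segment joining $x_0$ and $x_{\mathrm{ood}}$, as required at the start of this subsection. Its derivative is $\mathrm{D}e=\mathrm{D}\pi-\mathrm{D}\pi_{\mathrm{ref}}$, and by hypothesis $\|\mathrm{D}e(x)\|\leq C_{\mathrm{der}}$ at every point $x$ of the segment. Parametrize the segment by $g(s)=e(x_0+sd)$ for $s\in[0,1]$. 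The chain rule gives $g'(s)=\mathrm{D}e(x_0+sd)\,d$, so $\|g'(s)\|\leq C_{\mathrm{der}}\|d\|$ by the definition of the induced operator norm.

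Next, I will write $g(1)-g(0)=\int_0^1 g'(s)\,\mathrm ds$. This integral is a componentwise Riemann integral of a continuous $F$-valued function, which is legitimate because $e$ is $C^1$. I will then bound its norm by $\int_0^1\|g'(s)\|\,\mathrm ds\leq C_{\mathrm{der}}\|d\|$. Finally, the triangle inequality gives
\[
\|e(x_{\mathrm{ood}})\|\leq\|e(x_0)\|+\|g(1)-g(0)\|\leq\delta+C_{\mathrm{der}}\|d\|,
\]
where the base-error hypothesis supplies $\|e(x_0)\|\leq\delta$. This is the vector-valued version of \Cref{lem:error_linear}, and I would phrase the argument as that lemma applied along the segment.

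The only point needing care is the step $\|\int g'\|\leq\int\|g'\|$ for Euclidean-norm vector integrals. The componentwise scalar MVT cannot be used directly here, because it yields different intermediate points for different coordinates. I will handle this with a unit-vector reduction. If $g(1)\neq g(0)$, let $u=(g(1)-g(0))/\|g(1)-g(0)\|$ and consider the scalar function $h(s)=\langle u,g(s)\rangle$. By the scalar MVT, $h(1)-h(0)=h'(\xi)$ for some $\xi\in(0,1)$. Cauchy--Schwarz then gives $\|g(1)-g(0)\|=h'(\xi)=\langle u,g'(\xi)\rangle\leq\|g'(\xi)\|\leq C_{\mathrm{der}}\|d\|$. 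This avoids integration entirely and needs only differentiability of $g$ on $[0,1]$.

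The degenerate cases are immediate. If $d=0$, the claim is just the base-error bound. If $g(1)=g(0)$, the middle term vanishes.
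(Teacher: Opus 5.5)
Your proof is correct and follows the paper's argument: parametrize the segment by $x_0+sd$, bound the path derivative by $C_{\mathrm{der}}\|d\|$ using the chain rule and the induced operator norm, integrate over $[0,1]$, and add the base error $\delta$. Your unit-vector reduction with the scalar mean value theorem is a valid, slightly more elementary justification of $\|\int_0^1 g'(s)\,\mathrm ds\|\leq\int_0^1\|g'(s)\|\,\mathrm ds$, a step the paper uses without comment.
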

This is the result used by the uniform symbolic pipelines; see Appendix~\ref{app:proof_linear}.

\begin{lemma}[Path-envelope bound]
\label{lem:path_fencing}
Let $\upsilon\in[0,1]$ parameterize the segment
$\zeta(\upsilon)=x_0+\upsilon d$, and set
$e_{\mathrm{path}}(\upsilon)=\pi(\zeta(\upsilon))-
\pi_{\mathrm{ref}}(\zeta(\upsilon))$.
If $\|e_{\mathrm{path}}(0)\|\leq\delta$ and
$\|e_{\mathrm{path}}'(\upsilon)\|\leq
\varepsilon\|d\|+L_{\mathrm{err}}\|d\|^2\upsilon$, then
$\|e_{\mathrm{path}}(1)\|\leq
\delta+\varepsilon\|d\|+L_{\mathrm{err}}\|d\|^2/2$.
\end{lemma}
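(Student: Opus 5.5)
The plan is to apply the fundamental theorem of calculus to the vector-valued path error and then integrate the linear derivative envelope. By the standing hypothesis of this subsection, $\pi$ and $\pi_{\mathrm{ref}}$ are continuously differentiable on a neighborhood of the closed segment. Hence $e_{\mathrm{path}}=\pi\circ\zeta-\pi_{\mathrm{ref}}\circ\zeta$ is continuously differentiable on $[0,1]$, with $e_{\mathrm{path}}'(\upsilon)=\bigl(\mathrm{D}\pi(\zeta(\upsilon))-\mathrm{D}\pi_{\mathrm{ref}}(\zeta(\upsilon))\bigr)d$. We can therefore write
\[
e_{\mathrm{path}}(1)=e_{\mathrm{path}}(0)+\int_0^1 e_{\mathrm{path}}'(\upsilon)\,\mathrm d\upsilon,
\]
where the integral is taken componentwise in $F=\mathbb R^{d_a}$.

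The second step takes norms. The triangle inequality gives $\|e_{\mathrm{path}}(1)\|\leq\|e_{\mathrm{path}}(0)\|+\bigl\|\int_0^1 e_{\mathrm{path}}'\bigr\|$. For the integral term I will use $\bigl\|\int_0^1 g\bigr\|\leq\int_0^1\|g\|$ for a continuous $g:[0,1]\to\mathbb R^{d_a}$. This holds by applying the Euclidean inner product with the unit vector $u=\int g/\|\int g\|$ (the case $\int g=0$ is trivial) and then Cauchy--Schwarz inside the integral. Inserting the assumed envelope gives
\[
\int_0^1\bigl(\varepsilon\|d\|+L_{\mathrm{err}}\|d\|^2\upsilon\bigr)\,\mathrm d\upsilon=\varepsilon\|d\|+\tfrac{L_{\mathrm{err}}}{2}\|d\|^2.
\]
Adding the base bound $\delta$ completes the estimate.

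The only delicate point is the integral norm inequality, together with making sure continuity of $e_{\mathrm{path}}'$ justifies the fundamental theorem of calculus. The $C^1$ standing assumption settles the latter. If one prefers to avoid vector integrals, there is an alternative that uses only the mean value theorem for scalar functions, in the spirit of \Cref{thm:ood_1d} and Appendix~\ref{app:proof_linear}. Fix the unit vector $u$ aligned with $e_{\mathrm{path}}(1)-e_{\mathrm{path}}(0)$ and set $\phi(\upsilon)=\langle u,e_{\mathrm{path}}(\upsilon)\rangle-\varepsilon\|d\|\upsilon-L_{\mathrm{err}}\|d\|^2\upsilon^2/2$. Then $\phi'\leq0$ by Cauchy--Schwarz and the envelope, so $\phi(1)\leq\phi(0)$. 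This yields $\|e_{\mathrm{path}}(1)-e_{\mathrm{path}}(0)\|\leq\varepsilon\|d\|+L_{\mathrm{err}}\|d\|^2/2$, and the triangle inequality with $\|e_{\mathrm{path}}(0)\|\leq\delta$ finishes the proof. I would present this monotone-comparison version as the main argument, since it needs only differentiability on the segment.
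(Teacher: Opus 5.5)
Your proposal is correct. Your first argument is exactly the paper's proof: the fundamental theorem of calculus, then $\bigl\|\int_0^1 e_{\mathrm{path}}'\bigr\|\leq\int_0^1\|e_{\mathrm{path}}'\|$, then integrating the linear envelope. The paper gives this as a three-line chain and takes the integral norm inequality for granted; your Cauchy--Schwarz justification supplies the one step it leaves implicit.

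The monotone-comparison argument you want to make primary is a genuinely different route. You test against the fixed unit vector $u$ and subtract the antiderivative $\varepsilon\|d\|\upsilon+L_{\mathrm{err}}\|d\|^2\upsilon^2/2$ of the envelope. That reduces everything to the sign of a scalar derivative and the scalar mean value theorem. You then need neither vector-valued integration nor integrability of $e_{\mathrm{path}}'$, only continuity on $[0,1]$ and differentiability on $(0,1)$. Dispose of the case $e_{\mathrm{path}}(1)=e_{\mathrm{path}}(0)$ before normalizing $u$ in that version too.

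This buys slightly more generality than the paper, whose integral step leans on the subsection's standing $C^1$ hypothesis. Under that hypothesis the two routes are equally strong. The paper's version is shorter and matches the integral-based proofs of the scalar and linear bounds elsewhere in the same appendix.
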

Its integration proof appears in Appendix~\ref{app:proof_path}.

\begin{theorem}[Quadratic OOD bound]
\label{thm:ood_quad_nD}
Under \Cref{assump:reference,assump:smooth,assump:feature_lip},
\begin{equation}
\|\pi(x_{\mathrm{ood}})-\pi_{\mathrm{ref}}(x_{\mathrm{ood}})\|
\leq\delta+\varepsilon\|d\|+
\frac{L_{\mathrm{err}}}{2}\|d\|^2.
\label{eq:ood_bound}
\end{equation}
\end{theorem}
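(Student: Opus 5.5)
The plan is to reduce \Cref{thm:ood_quad_nD} to the path-envelope bound of \Cref{lem:path_fencing}, using \Cref{prop:composition} to supply the required derivative envelope. The segment is parameterized as $\zeta(\upsilon)=x_0+\upsilon d$ for $\upsilon\in[0,1]$, and we set $e_{\mathrm{path}}(\upsilon)=\pi(\zeta(\upsilon))-\pi_{\mathrm{ref}}(\zeta(\upsilon))$. Both maps are continuously differentiable on a neighborhood of the closed segment by \Cref{assump:reference}, so $e_{\mathrm{path}}$ is $C^1$ on $[0,1]$. The segment lies in the convex set $\Omega$ because its endpoints do; this is where convexity of $\Omega$ is used, since \Cref{assump:smooth,assump:feature_lip} are stated only on $\Omega$.

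\textbf{Step 1 (base error).} At $\upsilon=0$ we have $e_{\mathrm{path}}(0)=\pi(x_0)-\pi_{\mathrm{ref}}(x_0)$. \Cref{assump:reference} gives $\|e_{\mathrm{path}}(0)\|\leq\delta$ directly.

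\textbf{Step 2 (derivative envelope).} The chain rule gives $e_{\mathrm{path}}'(\upsilon)=\bigl(\mathrm{D}\pi(\zeta(\upsilon))-\mathrm{D}\pi_{\mathrm{ref}}(\zeta(\upsilon))\bigr)d$. By the definition of the induced operator norm, $\|e_{\mathrm{path}}'(\upsilon)\|\leq\|\mathrm{D}\pi(\zeta(\upsilon))-\mathrm{D}\pi_{\mathrm{ref}}(\zeta(\upsilon))\|\,\|d\|$. Applying \Cref{prop:composition} at $x=\zeta(\upsilon)$, and noting $\|\zeta(\upsilon)-x_0\|=\upsilon\|d\|$, bounds the operator-norm factor by $\varepsilon+L_{\mathrm{err}}\upsilon\|d\|$. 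Multiplying by $\|d\|$ yields $\|e_{\mathrm{path}}'(\upsilon)\|\leq\varepsilon\|d\|+L_{\mathrm{err}}\|d\|^2\upsilon$, which is exactly the hypothesis of \Cref{lem:path_fencing}.

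\textbf{Step 3 (integration).} \Cref{lem:path_fencing} then gives $\|e_{\mathrm{path}}(1)\|\leq\delta+\varepsilon\|d\|+L_{\mathrm{err}}\|d\|^2/2$. Since $e_{\mathrm{path}}(1)=\pi(x_{\mathrm{ood}})-\pi_{\mathrm{ref}}(x_{\mathrm{ood}})$, this is \eqref{eq:ood_bound}. If that lemma had to be proved here, the step would be the fundamental theorem of calculus for vector-valued $C^1$ maps, $e_{\mathrm{path}}(1)=e_{\mathrm{path}}(0)+\int_0^1e_{\mathrm{path}}'(\upsilon)\,\mathrm d\upsilon$, followed by the norm inequality for Bochner/Riemann integrals and integration of the linear envelope.

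\textbf{Main obstacle.} The only delicate point is the legitimacy of Step 2. \Cref{prop:composition} must apply at every point of the segment, which requires the whole segment to lie in $\Omega$, where the Lipschitz bounds hold. It also requires the derivatives to be continuous so that the integral representation is valid; a mean-value inequality alone would lose the factor $1/2$. I would state explicitly that the segment is contained in $\Omega$ and that continuity of $e_{\mathrm{path}}'$ follows from the $C^1$ hypothesis. No inner-product or dimension-specific argument is needed, because the bound is taken in norm directly rather than componentwise.
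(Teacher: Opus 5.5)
Your proposal is correct and matches the paper's proof step for step: you apply \Cref{prop:composition} along $\zeta(\upsilon)$, use the chain rule with the induced operator norm to get the envelope $\varepsilon\|d\|+L_{\mathrm{err}}\|d\|^2\upsilon$, take the base error from \Cref{assump:reference}, and conclude with \Cref{lem:path_fencing}. One small correction to your closing remark: only the constant-bound mean-value inequality loses the factor $1/2$, whereas the majorant form (if $\|e_{\mathrm{path}}'(\upsilon)\|\leq g'(\upsilon)$ then $\|e_{\mathrm{path}}(1)-e_{\mathrm{path}}(0)\|\leq g(1)-g(0)$) keeps it without needing a continuous derivative, though under the stated $C^1$ hypothesis your integral route is fine.
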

\begin{proof}[Proof sketch]
Use \eqref{eq:deriv_error_propagation} along $\zeta(\upsilon)$, apply the chain rule to $e_{\mathrm{path}}$, and invoke \Cref{lem:path_fencing}. Appendix~\ref{app:proof_main} gives the complete proof.
\end{proof}

The three terms describe boundary value error, boundary derivative mismatch, and subsequent derivative drift. They are conditional properties of the compared maps. An affine action map with an affine reference has $L_{\mathrm{err}}=0$; a linear feature-space head inside the implemented actor does not establish that case. SAC return optimization alone also does not certify $\delta$ against an arbitrary reference.

\subsection{Implemented training loop}
\label{sec:algorithm}

\Cref{alg:csbapr} summarizes the primary pipeline. The replay buffer $\mathcal{B}$ supplies sampling distribution $\mathcal{D}$; $N_{\mathrm{upd}}$ is the number of gradient updates per collection step. The optional symbolic loss coefficients are $\lambda_{\mathrm{sym}}$ for the critic target and $\lambda_{\mathrm{JC}}$ for the actor. The primary transit configuration sets both to zero.

\begin{algorithm}[H]
\caption{CS-BAPR training with optional symbolic diagnostics}
\label{alg:csbapr}
\begin{algorithmic}[1]
\STATE Initialize actor parameters $\theta$, critics $\{\phi_k\}$, targets $\{\bar\phi_k\}$, replay buffer $\mathcal{B}$, and BOCD belief $\rho$.
\IF{symbolic diagnostics are enabled}
    \STATE Collect exploration transitions; fit $f_{\mathrm{sym}}$; optionally select across environments; freeze its coefficients.
    \STATE For symbolic penalties, construct and freeze the action-valued reference $\pi_{\mathrm{sym}}=\mathcal C(f_{\mathrm{sym}})$.
\ENDIF
\FOR{each collection step}
    \STATE Collect transitions; append to $\mathcal{B}$ and the recent-rollout tail.
    \STATE Compute surprise $\chi_t$ and update $\rho$.
    \FOR{each of $N_{\mathrm{upd}}$ gradient updates}
        \STATE Sample a replay batch; apply reward normalization and entropy floor.
        \STATE Compute $\lambda_w$ and warmup-adjusted $\beta_{\mathrm{eff}}$.
        \STATE Freeze target values, actor sample, belief, and symbolic discrepancy for the critic target.
        \STATE Compute critic gradients with \eqref{eq:critic_loss} and the minimum-target setting; clip and update critics.
        \STATE Compute actor gradients with \eqref{eq:actor_loss} and the BC-removal setting.
        \STATE Clip actor gradients; update the actor; softly update target networks.
    \ENDFOR
\ENDFOR
\end{algorithmic}
\end{algorithm}
The target and loss definitions are in Appendix~\ref{app:operator_detail}. Disabling symbolic diagnostics retains this training recipe and the fixed-backup contraction template.

\section{Experiments}
\label{sec:experiments}

We evaluate the transit-focused training recipe, with optional symbolic losses disabled. ID operation uses nominal demand. The questions are whether stabilization improves the BAPR implementation, whether the actor-head choice changes OOD performance, and how the behavior transfers to different disturbance axes.

\subsection{Transit benchmark and evaluation protocol}

\paragraph{Environment.}
\texttt{MultiLineEnv} is an event-driven holding simulator for 12 urban bus lines, using SUMO-calibrated headway and dwell dynamics. The single-agent fleet formulation follows the bus-control setting of Zhang~\cite{Zhang2025SingleAgentBus}. At station arrivals, the normalized action $a\in[-1,1]$ is converted to hold duration $h_{\mathrm{hold}}=30(a+1)$ s, with a maximum of 60 s. This keeps the replay action in normalized coordinates and the physical hold time in seconds. The 15-dimensional state includes headways, load, boarding rate, speed, delay, previous action, and line/station/time descriptors. Poisson passenger arrivals use line-specific nominal rates.

\paragraph{Approximate linearity diagnostic.}
Headway-based control~\cite{daganzo2009headway} and optimal linear schedule-reliability control~\cite{xuan2011dynamic} motivate feedback that is locally linear in timing deviations. The recorded diagnostic for a strong learned controller uses 47,247 decisions over five ID episodes: a 50/50 train/test split gives a held-out coefficient of determination $R^2=0.70$ for an ordinary-least-squares fit from normalized state to hold time. The recorded ID clamp rate is zero. This diagnostic suggests approximate local linearity; it does not establish a globally linear policy or prove that optimal holding scales with demand.

\paragraph{OOD protocols.}
The demand multiplier $m$ scales the nominal Poisson arrival rate:
\begin{itemize}
    \item \textbf{Static:} $m\in\{1,2,5,10,20,50\}$ for the full 18,000 s episode.
    \item \textbf{Abrupt:} $m=1$ before $t=3600$ s, followed by one switch to $m\in\{5,10,20,50\}$.
    \item \textbf{Oscillating:} piecewise-constant within-episode schedules. \texttt{commuter\_day} uses multipliers $(1,10,2,20,5,50)$ at times $(0,3600,7200,10800,14400,16200)$ s. \texttt{square\_wave\_20x} alternates 1 and 20 every 1800 s, starting at 1, for nine switches. \texttt{escalating\_10x} uses $(1,2,5,15,50)$ at $(0,3600,7200,10800,14400)$ s; the name is a configuration identifier.
\end{itemize}

\paragraph{Configurations.}
The six reported transit configurations are \texttt{bapr-pristine} (ReLU MLP, six settings disabled), \texttt{bapr} (historical legacy preset), \texttt{csbapr-no-nau} (ReLU MLP with stabilization), \texttt{csbapr} (mixed arithmetic head with stabilization), \texttt{csbapr-tanh} (tanh-hidden MLP with stabilization), and \texttt{csbapr-kan} (KAN with stabilization). The BC coefficient follows \Cref{sec:training_fixes}. Every transit row uses $N_{\mathrm{seed}}=10$ seeds and 500 training episodes per seed, with the selected ID checkpoint evaluated under the OOD protocols. Selection is based on the best ID score rather than an OOD score. Tables report medians across seeds except \Cref{tab:main}, which reports mean and standard deviation. Return-unit suffixes K and M mean $10^3$ and $10^6$; higher return is better. Appendix~\ref{app:hyperparams} documents defaults and overrides.

\subsection{ID convergence}
\label{sec:id_results}

\Cref{tab:main} reports ID total return across the 12 lines. The arithmetic actor is comparable to the stabilized ReLU variant, and KAN has the highest mean. Different aggregation explains why these means differ from the medians at $m=1$ in \Cref{tab:ood_sweep}.

\begin{table}[htbp]
\centering
\caption{ID total return across the 12 lines at $m=1$. Higher is better. Mean $\pm$ standard deviation, $N_{\mathrm{seed}}{=}10$ seeds per row.}
\label{tab:main}
\begin{tabular}{lcr}
\toprule
Method & $N_{\mathrm{seed}}$ seeds & ID total return (K) \\
\midrule
\texttt{bapr-pristine} (ReLU $+$ no fixes)        & 10 & $-850 \pm 37$ \\
\texttt{bapr} (historical legacy)              & 10 & $-793 \pm 34$ \\
\texttt{csbapr-no-nau} (ReLU-MLP $+$ 6 fixes)     & 10 & $-780 \pm 55$ \\
\texttt{csbapr} (NAU $+$ 6 fixes)                 & 10 & $-772 \pm 36$ \\
\texttt{csbapr-tanh} (smooth-MLP $+$ 6 fixes)     & 10 & $-817 \pm 54$ \\
\texttt{csbapr-kan} (KAN $+$ 6 fixes)             & 10 & $-759 \pm 35$ \\
\bottomrule
\end{tabular}
\end{table}

\subsection{Demand OOD and stabilization ablation}
\label{sec:ood_results}

\Cref{tab:ood_sweep,tab:abrupt,tab:oscillating} report the three demand protocols, and \Cref{tab:crash} reports severe seed outcomes. The stabilization comparison holds the ReLU architecture fixed: relative to \texttt{bapr-pristine}, \texttt{csbapr-no-nau} reduces absolute static return loss by $10.6\%$, $11.9\%$, $19.0\%$, $15.1\%$, and $7.2\%$ at $m=1,5,10,20,50$, respectively. Relative improvement here means the return difference divided by the absolute baseline return.

The arithmetic variant's corresponding gains over \texttt{bapr-pristine} are $10.3\%$, $14.1\%$, $16.2\%$, $13.3\%$, and $8.7\%$. Its gain over the stabilized ReLU variant at $m=5$ is about $2.4\%$. The head comparison includes different architectures and head regularizers, rather than an isolated intervention on derivative curvature.

\begin{table}[htbp]
\centering
\scriptsize
\caption{Static OOD demand sweep, total episode return (K, higher is better). Median episode return over $N_{\mathrm{seed}}{=}10$ seeds per row. $m=1$ is ID. \texttt{bapr-pristine} disables all six stabilization settings; \texttt{bapr} is a historical legacy configuration. \textbf{Bold}: best in column.}
\label{tab:ood_sweep}
\begin{tabular}{lrrrrrr}
\toprule
Method & $m{=}1$ & $m{=}2$ & $m{=}5$ & $m{=}10$ & $m{=}20$ & $m{=}50$ \\
\midrule
\texttt{bapr-pristine} (ReLU, no fixes)         & $-861$ & $-862$ & $-939$ & $-1134$ & $-1267$ & $-1416$ \\
\texttt{bapr} (historical legacy)            & $-791$ & $-799$ & $-878$ & $-1025$ & $-1196$ & $-1428$ \\
\texttt{csbapr-no-nau} (ReLU-MLP$+$6 fixes)     & $-770$ & $-776$ & $-827$ & $\mathbf{-919}$ & $\mathbf{-1076}$ & $-1314$ \\
\texttt{csbapr} (NAU$+$6 fixes)                 & $-772$ & $-778$ & $\mathbf{-807}$ & $-950$ & $-1098$ & $-1293$ \\
\texttt{csbapr-tanh} (smooth-MLP$+$6 fixes)     & $-786$ & $-824$ & $-897$ & $-986$ & $-1087$ & $\mathbf{-1239}$ \\
\texttt{csbapr-kan} (KAN$+$6 fixes)             & $\mathbf{-756}$ & $\mathbf{-762}$ & $-820$ & $-971$ & $-1121$ & $-1335$ \\
\bottomrule
\end{tabular}
\end{table}

\begin{table}[htbp]
\centering
\scriptsize
\caption{Abrupt-burst protocol: $m = 1$ for $t \in [0, 3600)$ s, then abrupt switch to the specified multiplier. Total episode return (K), median over $N_{\mathrm{seed}}{=}10$ seeds.}
\label{tab:abrupt}
\begin{tabular}{lrrrr}
\toprule
Method & burst $5{\times}$ & burst $10{\times}$ & burst $20{\times}$ & burst $50{\times}$ \\
\midrule
\texttt{bapr-pristine}    & $-909$  & $-1021$ & $-1178$ & $-1354$ \\
\texttt{bapr} (legacy)    & $-836$  & $-963$  & $-1104$ & $-1316$ \\
\texttt{csbapr-no-nau}    & $-792$ & $\mathbf{-858}$  & $\mathbf{-982}$  & $-1205$ \\
\texttt{csbapr} (NAU)     & $-792$ & $-881$ & $-1000$ & $-1196$ \\
\texttt{csbapr-tanh}      & $-847$  & $-932$  & $-1017$ & $\mathbf{-1152}$ \\
\texttt{csbapr-kan}       & $\mathbf{-790}$  & $-909$  & $-1054$ & $-1224$ \\
\bottomrule
\end{tabular}
\end{table}

\begin{table}[htbp]
\centering
\scriptsize
\caption{Oscillating OOD protocol: within-episode piecewise-constant demand schedule. Total episode return (K), median over $N_{\mathrm{seed}}{=}10$ seeds. Schedules: \texttt{commuter\_day} has 5 switches with peak $50{\times}$; \texttt{square\_wave\_20x} has 9 switches with peak $20{\times}$; \texttt{escalating\_10x} has 4 switches with peak $50{\times}$.}
\label{tab:oscillating}
\begin{tabular}{lrrr}
\toprule
Method & \texttt{commuter\_day} & \texttt{square\_wave\_20x} & \texttt{escalating\_10x} \\
\midrule
\texttt{bapr-pristine}    & $-998$  & $-1111$ & $-956$ \\
\texttt{bapr} (legacy)    & $-954$  & $-1045$ & $-920$ \\
\texttt{csbapr-no-nau}    & $\mathbf{-859}$ & $\mathbf{-921}$ & $\mathbf{-845}$ \\
\texttt{csbapr} (NAU)     & $-865$  & $-927$  & $-853$ \\
\texttt{csbapr-tanh}      & $-929$  & $-990$  & $-908$ \\
\texttt{csbapr-kan}       & $-914$  & $-972$  & $-912$ \\
\bottomrule
\end{tabular}
\end{table}

\begin{table}[htbp]
\centering
\scriptsize
\caption{Crash rate at $m{=}20{\times}$ across multiple severity thresholds. Each cell reports the fraction of seeds whose return at $m{=}20{\times}$ falls below the threshold (lower is better). All six methods use $N_{\mathrm{seed}}{=}10$ seeds. Thresholds show both typical poor performance and severe tail outcomes.}
\label{tab:crash}
\setlength{\tabcolsep}{5pt}
\begin{tabular}{lcccccc}
\toprule
Method & $N_{\mathrm{seed}}$ & $<{-}1{,}200$K & $<{-}1{,}500$K & $<{-}1{,}800$K & $<{-}2{,}000$K & Worst seed (K) \\
\midrule
\texttt{bapr-pristine} (ReLU, no fixes)        & 10 & $70\%$ & $0\%$  & $0\%$  & $0\%$ & $-1{,}422$ \\
\texttt{bapr} (historical legacy)           & 10 & $50\%$ & $0\%$  & $0\%$  & $0\%$ & $-1{,}367$ \\
\texttt{csbapr-no-nau} (ReLU-MLP$+$6 fixes)    & 10 & $40\%$ & $0\%$  & $0\%$  & $0\%$ & $-1{,}309$ \\
\texttt{csbapr} (NAU$+$6 fixes)                & 10 & $\mathbf{20\%}$ & $10\%$ & $10\%$ & $0\%$ & $-1{,}851$ \\
\texttt{csbapr-tanh} (smooth-MLP$+$6 fixes)    & 10 & $30\%$ & $0\%$  & $0\%$  & $0\%$ & $\mathbf{-1{,}257}$ \\
\texttt{csbapr-kan} (KAN$+$6 fixes)            & 10 & $30\%$ & $10\%$ & $0\%$  & $0\%$ & $-1{,}551$ \\
\bottomrule
\end{tabular}
\end{table}

No architecture dominates. KAN has the best static medians at $m=1,2$ and the best burst-$5\times$ value. The arithmetic variant is best at static $m=5$; the stabilized ReLU variant is best at static and burst $m=10,20$ and all three oscillating schedules; the tanh variant is best at static and burst $m=50$. These are rankings of the reported medians, without a significance claim.

The crash thresholds distinguish typical performance from severe tails. At $m=20$, the arithmetic variant has one seed at $-1{,}851$K, KAN has one at $-1{,}551$K, and no configuration has a seed below $-2{,}000$K. The arithmetic head therefore does not eliminate severe outcomes. The reported returns cannot establish that an outlier was caused by a particular weight constraint or derivative constant.

\subsection{Return degradation diagnostic}
\label{sec:bound_alignment}

\Cref{fig:bound} plots
$\Delta J(m)=J(1)-J(m)$,
where $J(m)$ is the static-table median episode return at demand multiplier $m$. Thus the plot is a difference of medians, not a median of paired seed differences. It visualizes degradation of the same reported configurations and data in \Cref{tab:ood_sweep}.

The demand axis $m$ is not the state displacement $d$ in \eqref{eq:ood_bound}, and return has different units from action-output error. We therefore compare the observed degradation directly, without overlaying an unmeasured policy-output certificate.

\begin{figure}[H]
\centering
\includegraphics[width=0.92\linewidth]{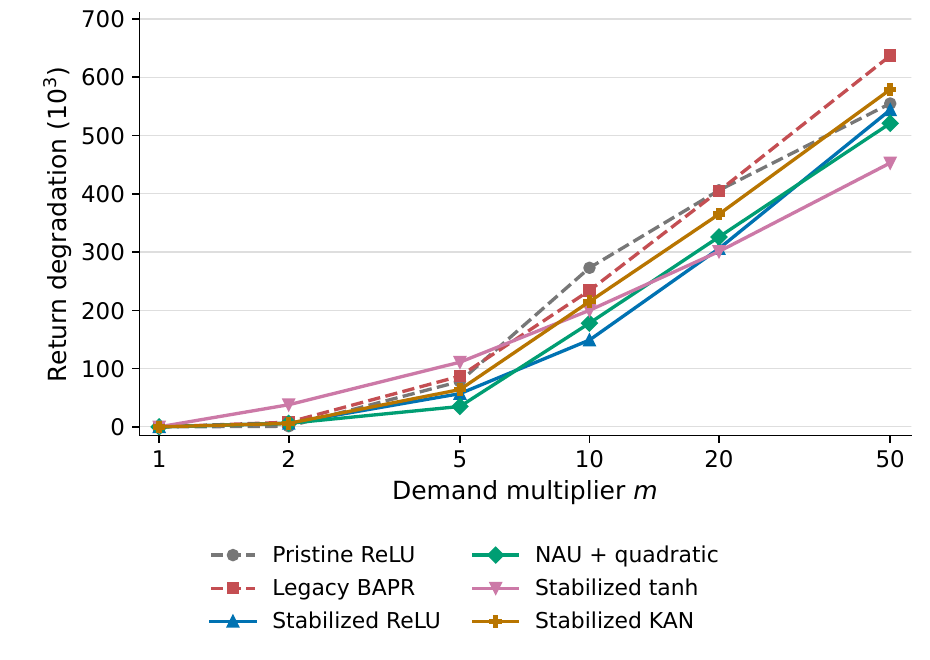}
\caption{Static-demand return degradation relative to each configuration's ID median. Every point is computed from \Cref{tab:ood_sweep}, with $N_{\mathrm{seed}}=10$ for each underlying median. Lower degradation is better relative to that configuration's own ID level. The figure reports a difference of aggregate medians and has no seed-level uncertainty bands.}
\label{fig:bound}
\end{figure}

\FloatBarrier

\subsection{Auxiliary benchmarks}
\label{sec:aux_benchmarks}

\paragraph{Exploratory LQR amplitude sweep.}
A point-mass double integrator with quadratic cost is a simple reference setting. In the unsaturated discrete linear model, $x_{t+1}=Ax_t+B_{\mathrm{in}}a_t$ and the optimal feedback is $a^*=-K_{\mathrm{fb}}x$, where $A$ is the state-dynamics matrix, $B_{\mathrm{in}}$ the input matrix, and $K_{\mathrm{fb}}$ the feedback gain determined by dynamics and cost. An identified $A$ alone is not that feedback gain. Initial-state radius is multiplied by $m_{\mathrm{amp}}$, and the reported historical cells are shown in \Cref{tab:lqr}.

\begin{table}[htbp]
\centering
\scriptsize
\caption{Exploratory LQR initial-amplitude sweep, median unaugmented episode return over $N_{\mathrm{seed}}=3$ seeds. Higher is better. K and M denote $10^3$ and $10^6$ return units; -- denotes an unreported cell. The symbolic-enabled configurations were labeled with \texttt{SINDY\_WITH\_CONTROL=1}; the implementation limitation is discussed in \Cref{sec:limitations}. \textbf{Bold}: best reported value in the column.}
\label{tab:lqr}
\begin{tabular}{lrrrrrr}
\toprule
Method & $m_{\mathrm{amp}}{=}1$ & $m_{\mathrm{amp}}{=}2$ & $m_{\mathrm{amp}}{=}5$ & $m_{\mathrm{amp}}{=}10$ & $m_{\mathrm{amp}}{=}20$ & $m_{\mathrm{amp}}{=}50$ \\
\midrule
\texttt{bapr} (no SINDy, no NAU)         & $\mathbf{-168}$  & --      & $-138$K & $-1.28$M & --      & $\mathbf{-9.40}$M \\
\texttt{csbapr-no-sindy} (NAU, no SINDy) & $-1{,}164$  & --      & $-105$K & $-2.26$M & --      & $-11.85$M \\
\texttt{csbapr-no-nau} (MLP$+$SINDy)     & $-2{,}301$  & --      & $-18$K  & $-0.94$M & --      & $-9.84$M \\
\texttt{csbapr} (NAU$+$SINDy, full)      & $-2{,}916$  & --      & $\mathbf{-10}$K  & $\mathbf{-0.90}$M & --      & $-11.45$M \\
\bottomrule
\end{tabular}
\end{table}

Within the reported later sweep, the symbolic-enabled arithmetic configuration improves absolute return loss over \texttt{csbapr-no-sindy} by about $90.5\%$ at $m_{\mathrm{amp}}=5$ ($-10$K versus $-105$K) and $60.2\%$ at $m_{\mathrm{amp}}=10$ ($-0.90$M versus $-2.26$M). The earlier state-only configuration reported $-381$K and $-2.65$M at those amplitudes; comparing the two versions gives $97.4\%$ and $66.0\%$, a different comparison from the no-SINDy ablation. At ID, BAPR is best, and at $m_{\mathrm{amp}}=50$ its $-9.40$M is also the highest reported return. The symbolic-enabled results are retained as exploratory configuration comparisons; the action-input and JC limitations are stated in \Cref{sec:limitations}.

\paragraph{Hopper-v4 wind.}
A constant horizontal force acts on the torso:
\[
F_{\mathrm{wind}}=\omega m_{\mathrm{torso}}g,
\]
where $\omega\in\{0,1,2,5,10,20\}$ is dimensionless wind amplitude, $m_{\mathrm{torso}}$ is torso mass, and $g$ is gravitational acceleration. Training uses $\omega=0$; evaluation sweeps the full range. \Cref{tab:hopper} reports the medians.

\begin{table}[htbp]
\centering
\scriptsize
\caption{Hopper-v4 wind sweep, median episode return over $N_{\mathrm{seed}}=3$ seeds except the legacy comparison ($N_{\mathrm{seed}}=2$, interrupted runs excluded). The pristine row disables the six stabilization settings; the legacy row is retained as a historical comparison. Wind amplitude $\omega=0$ is ID. \textbf{Bold}: best in the column.}
\label{tab:hopper}
\begin{tabular}{lrrrrrr}
\toprule
Method & ID ($\omega{=}0$) & $\omega{=}1$ & $\omega{=}2$ & $\omega{=}5$ & $\omega{=}10$ & $\omega{=}20$ \\
\midrule
\texttt{bapr-pristine} (ReLU, no fixes)              & $635$  & $294$           & $\mathbf{276}$           & $85$            & $127$           & $197$           \\
\texttt{bapr}/\texttt{csbapr-relu} (legacy, $N_{\mathrm{seed}}{=}2$)  & $995$  & $216$           & $221$           & $144$           & $99$            & $107$           \\
\texttt{csbapr-tanh} (smooth-MLP$+$6 fixes)           & $975$  & $\mathbf{330}$  & $251$           & $203$           & $239$           & $56$            \\
\texttt{csbapr-no-nau} (ReLU-MLP$+$6 fixes)           & $\mathbf{1{,}008}$ & $317$           & $270$  & $\mathbf{255}$  & $\mathbf{256}$  & $\mathbf{224}$  \\
\texttt{csbapr} (NAU$+$6 fixes)                       & $990$              & $289$           & $175$           & $62$            & $81$            & $74$            \\
\bottomrule
\end{tabular}
\end{table}

The stabilized ReLU variant has the highest ID median and is best at $\omega=5,10,20$. The pristine variant is best at $\omega=2$, and tanh is best at $\omega=1$. The arithmetic variant falls from 990 at ID to 62 at $\omega=5$ and 74 at $\omega=20$, so its transit ranking does not transfer to this disturbance axis. Holding ReLU fixed, stabilization raises return from 635 to 1008 at ID, from 85 to 255 at $\omega=5$, and from 197 to 224 at $\omega=20$. This supports a useful training recipe in these evaluated cases, with task-dependent architectural behavior.

\FloatBarrier

\section{Discussion and conclusion}
\label{sec:conclusion}

The strongest empirical finding is that the stabilization recipe improves the evaluated BAPR implementation on the transit demand protocols. Changing the actor head provides smaller, regime-dependent differences: arithmetic is best at static $5\times$ demand, KAN at nominal and near-nominal demand, stabilized ReLU at several mid-to-high demand and switching conditions, and tanh at the $50\times$ tail.

The theoretical finding is the conditional action-output bound \eqref{eq:ood_bound}, whose constants include both policy and reference regularity. The layer calculations in \Cref{thm:nau_const,thm:nmu_bias,thm:relu_impossible} explain one source of curvature control, while the composition analysis identifies the additional assumptions needed for a complete policy certificate. The frozen Bellman analysis retains the predecessor's fixed-backup contraction template. The proof map in Appendix~\ref{app:lean_overview} connects each result to the supplied proof rather than to an unavailable formal artifact.

\subsection{Limitations}
\label{sec:limitations}

\paragraph{Whole-policy certification and performance.}
The historical arithmetic actor has LeakyReLU feature layers, a quadratic correction, and tanh squashing. It has not been certified to satisfy \Cref{assump:feature_lip} across the OOD segments; its earlier curvature surrogate was not a valid $L_\pi$. The revised \texttt{smooth-nau} variant uses tanh features and computes a whole deterministic-policy bound, including the quadratic projection and action squash, as detailed in Appendix~\ref{app:leff}. It leaves the historical actor available and reports no finite certificate for that actor or a hard mean clamp. The certificate covers normalized action outputs, not sampled-action distributions, episode return, or deep actor-critic convergence. It can be conservative. The reported tables do not evaluate the new variant.

\paragraph{Symbolic and invariance components.}
The historical symbolic implementation omitted actions and used the projected loss \eqref{eq:projected_loss}. The revised code supplies actions to fitting and prediction, constructs an LQR feedback reference, and implements \eqref{eq:jac_loss}. These repairs are implementation changes, not retrospective validation of \Cref{tab:lqr}; new performance experiments are needed. Other tasks require their own action-valued control construction. Prediction-error variance can prefer a consistently inaccurate model; the selector neither proves derivative accuracy nor establishes SCM causality. IRM filtering is inactive in the reported primary runs.

\paragraph{Experimental scope and accounting.}
The 12-line transit simulator is one application setting; LQR and Hopper have three seeds per principal configuration. The legacy Hopper comparison has two completed seeds with interrupted runs excluded, and does not match the pristine baseline. The historical transit \texttt{bapr} row predates later defaults; the current \texttt{bapr} entrypoint inherits those defaults and does not by itself reproduce that older configuration. Actor weight decay and regularizer strengths also belong to the compared recipe, so the pristine comparison is not a six-factor causal ablation. Standard SAC, domain-randomized training, and robust adversarial RL (RARL) are not included in the primary result tables. The comparisons establish improvement over the evaluated predecessor configurations, with no universal claim for the stabilization package or head.

\paragraph{Reproducibility and proof artifacts.}
The manuscript-level comma-separated values (CSV) file transcribes aggregate table values. It does not replace original seed-level logs, checkpoint-selection records, or traces supporting the linear-fit diagnostic; those historical artifacts are absent from the retained package following storage cleanup. The diagnostics and statistical comparisons cannot be independently reconstructed from this CSV alone. This repository snapshot supplies mathematical proofs but no executable Lean project for the CS-BAPR results. Formalization tags in earlier drafts were not a machine-checking certificate.

The next empirical step is a fresh comparison of the revised smooth actor and action-valued feedback alignment, retaining per-seed returns and checkpoint-selection records. Boundary value and operator-norm derivative errors in \eqref{eq:ood_bound} are measured at the stated boundary point, not inferred from replay-average losses or sample counts. Larger auxiliary seed sweeps and additional standard baselines would then assess transfer beyond transit demand scaling.

\section*{Code and Data Availability}
Code is available at \url{https://github.com/erzhu419/CS-BAPR}. Aggregate source values for \Cref{tab:main,tab:ood_sweep,tab:abrupt,tab:oscillating,tab:crash,tab:lqr,tab:hopper} are transcribed in \path{paper/source_data/reported_tables.csv}. The figure is regenerated from its static-sweep rows. Original seed-level evidence is not included in the retained manuscript package.

\section*{Acknowledgments}
This work was supported by the National Natural Science Foundation of China (Grant No. 72371251), the Natural Science Foundation for Distinguished Young Scholars of Hunan Province (Grant No. 2024JJ2080), and the Key Research and Development Program of Hunan Province of China (Grant No. 2024JK2007).


\appendix

\section{Proof map}
\label{app:lean_overview}

The appendix supplies mathematical proofs and implementation equations. The main text invokes each proof at the result it supports. The dependency map is:
\begin{itemize}
\item \Cref{lem:error_linear,thm:ood_linear_nD}: Appendix~\ref{app:proof_linear}.
\item \Cref{thm:ood_1d,lem:path_fencing,thm:ood_quad_nD}: Appendices~\ref{app:proof_scalar}, \ref{app:proof_path}, and \ref{app:proof_main}.
\item \Cref{prop:composition}: Appendix~\ref{app:component_lemma}.
\item \Cref{thm:csbapr_contraction,cor:belief_update,thm:frozen_sym_necessary}: Appendices~\ref{app:proof_contraction}--\ref{app:proof_feedback}.
\item \Cref{thm:sindy_jac,thm:sindy_pipeline}: Appendix~\ref{app:proof_symbolic}.
\item \Cref{thm:irm_causal,thm:irm_pipeline}: Appendix~\ref{app:proof_irm}.
\item The composition inequality \eqref{eq:composition_rule}: Appendix~\ref{app:leff}.
\end{itemize}
These are paper proofs; an executable Lean project is not included in the current artifact.

\section{Extrapolation proofs}

\subsection{Linear derivative-error bounds}
\label{app:proof_linear}

\begin{proof}[Proof of \Cref{lem:error_linear}]
Put \(e_{\mathrm{out}}(x)=\pi(x)-\pi_{\mathrm{ref}}(x)\). By the fundamental theorem of calculus,
\[
|e_{\mathrm{out}}(b_+)-e_{\mathrm{out}}(b_-)|=\left|\int_{b_-}^{b_+}e_{\mathrm{out}}'(x)\,\mathrm d x\right|
\leq\int_{b_-}^{b_+}C_{\mathrm{der}}\,\mathrm d x
=C_{\mathrm{der}}(b_+-b_-).
\]
\end{proof}

\begin{proof}[Proof of \Cref{thm:ood_linear_nD}]
Parametrize the segment by \(\zeta(\upsilon)=x_0+\upsilon d\), where
\(d=x_{\mathrm{ood}}-x_0\) and \(\upsilon\in[0,1]\). For
\(e_{\mathrm{out}}=\pi-\pi_{\mathrm{ref}}\), the chain rule gives
\[
\left\|\frac{\mathrm d}{\mathrm d\upsilon}e_{\mathrm{out}}(\zeta(\upsilon))\right\|
=\|\mathrm{D}e_{\mathrm{out}}(\zeta(\upsilon))d\|
\leq C_{\mathrm{der}}\|d\|.
\]
Integrating and adding \(\|e_{\mathrm{out}}(x_0)\|\leq\delta\) yields
\(\|e_{\mathrm{out}}(x_{\mathrm{ood}})\|\leq\delta+C_{\mathrm{der}}\|d\|\).
\end{proof}

\subsection{Scalar quadratic bound}
\label{app:proof_scalar}

\begin{proof}[Proof of \Cref{thm:ood_1d}]
With \(e_{\mathrm{out}}=\pi-\pi_{\mathrm{ref}}\), integrate the assumed envelope:
\[
|e_{\mathrm{out}}(b_+)|\leq|e_{\mathrm{out}}(b_-)|+\int_{b_-}^{b_+}|e_{\mathrm{out}}'(x)|\,\mathrm d x
\leq\delta+\int_{b_-}^{b_+}
[\varepsilon+L_{\mathrm{err}}(x-b_-)]\,\mathrm d x.
\]
The integral equals
\(\varepsilon(b_+-b_-)+L_{\mathrm{err}}(b_+-b_-)^2/2\).
\end{proof}

\subsection{Path-envelope integration}
\label{app:proof_path}

\begin{proof}[Proof of \Cref{lem:path_fencing}]
The path error is continuously differentiable. Therefore
\[
\begin{split}
\|e_{\mathrm{path}}(1)\|
&\leq\|e_{\mathrm{path}}(0)\|
+\int_0^1\|e_{\mathrm{path}}'(\upsilon)\|\,\mathrm d\upsilon\\
&\leq\delta+\int_0^1
[\varepsilon\|d\|+L_{\mathrm{err}}\|d\|^2\upsilon]\,\mathrm d\upsilon\\
&=\delta+\varepsilon\|d\|+\tfrac12 L_{\mathrm{err}}\|d\|^2.
\end{split}
\]
\end{proof}

\subsection{Derivative-error propagation}
\label{app:component_lemma}

\begin{proof}[Expanded proof of \Cref{prop:composition}]
Insert the boundary derivatives and apply the triangle inequality:
\[
\begin{aligned}
\|\mathrm D\pi(x)-\mathrm D\pi_{\mathrm{ref}}(x)\|
&\leq \|\mathrm D\pi(x)-\mathrm D\pi(x_0)\|
+\|\mathrm D\pi(x_0)-\mathrm D\pi_{\mathrm{ref}}(x_0)\|\\
&\quad+\|\mathrm D\pi_{\mathrm{ref}}(x_0)-\mathrm D\pi_{\mathrm{ref}}(x)\|\\
&\leq L_\pi\|x-x_0\|+\varepsilon+M\|x-x_0\|.
\end{aligned}
\]
This is \eqref{eq:deriv_error_propagation} with
\(L_{\mathrm{err}}=L_\pi+M\).
\end{proof}

\subsection{Whole-policy quadratic bound}
\label{app:proof_main}

\begin{proof}[Proof of \Cref{thm:ood_quad_nD}]
By \Cref{prop:composition}, at every point of the segment,
\[
\|\mathrm{D}\pi(\zeta(\upsilon))
-\mathrm{D}\pi_{\mathrm{ref}}(\zeta(\upsilon))\|
\leq\varepsilon+L_{\mathrm{err}}\upsilon\|d\|.
\]
Multiplication by \(d\) through the chain rule bounds the path-error derivative by
\(\varepsilon\|d\|+L_{\mathrm{err}}\upsilon\|d\|^2\).
The base error is at most \(\delta\). Apply \Cref{lem:path_fencing}.
For \(d=0\), the statement is simply the base-error assumption.
\end{proof}

\section{Bellman and symbolic-reference proofs}

\subsection{Frozen-operator contraction}
\label{app:proof_contraction}

\begin{proof}[Proof of \Cref{thm:csbapr_contraction}]
Let \(d_Q=\|Q_1-Q_2\|_\infty\). The actor and temperature are identical in the two soft values, so their entropy terms cancel:
\[
|V^{Q_1}(x')-V^{Q_2}(x')|
\leq\int_{\mathcal A}p_\theta(a'\mid x')
|Q_1(x',a')-Q_2(x',a')|\,\mathrm d a'\leq d_Q.
\]
Rewards and frozen penalties also cancel. Since each \(P_\ell\) is a probability kernel,
\[
|(\mathcal T^{\mathrm{CS}}_\ell Q_1)(x,a)
-(\mathcal T^{\mathrm{CS}}_\ell Q_2)(x,a)|
\leq\gamma\int_{\mathcal S}d_Q P_\ell(dx'\mid x,a)=\gamma d_Q.
\]
Averaging with the fixed nonnegative weights \(\rho(\ell)\), whose sum is one, and taking the supremum preserves this inequality. Bounded rewards, entropy contributions, and penalties make the operator a self-map on the complete space of bounded measurable value functions. The contraction mapping theorem gives the unique fixed point.
\end{proof}

\subsection{Belief normalization}
\label{app:proof_belief}

\begin{proof}[Proof of \Cref{cor:belief_update}]
Every \(\rho^+(\ell)\) is nonnegative and
\(\sum_\ell\rho^+(\ell)=Z_\rho/Z_\rho=1\).
Appendix~\ref{app:proof_contraction} therefore applies with \(\rho^+\) fixed.
The conclusion concerns this new operator, not convergence of a sequence of changing operators.
\end{proof}

\subsection{Value-dependent feedback}
\label{app:proof_feedback}

\begin{proof}[Proof of \Cref{thm:frozen_sym_necessary}]
In the one-state, one-action example the entropy is zero and the next-state value is \(Q\). Absorbing all fixed terms into a constant \(C_0\), the backup is
\[
\mathcal T Q=C_0+\gamma(1+\lambda_{\mathrm{sym}}\nu)Q.
\]
For distinct scalar values its exact Lipschitz ratio is
\(\gamma(1+\lambda_{\mathrm{sym}}\nu)\).
A sufficiently large \(C_{\mathrm{sym}}\) makes
\(C_{\mathrm{sym}}-\nu Q\geq0\) on any chosen bounded comparison interval.
The affine map contracts precisely when its ratio is below one.
\end{proof}

\subsection{Symbolic-reference pipeline}
\label{app:proof_symbolic}

\begin{proof}[Proof of \Cref{thm:sindy_jac,thm:sindy_pipeline}]
At any point where the two hypotheses hold,
\[
\begin{split}
\|\mathrm D\pi-\mathrm D\pi_{\mathrm{ref}}\|
&\leq\|\mathrm D\pi-\mathrm D\pi_{\mathrm{sym}}\|
+\|\mathrm D\pi_{\mathrm{sym}}-\mathrm D\pi_{\mathrm{ref}}\|\\
&\leq\varepsilon_2+\varepsilon_1.
\end{split}
\]
If this holds throughout the segment, apply \Cref{thm:ood_linear_nD}
with \(C_{\mathrm{der}}=\varepsilon_1+\varepsilon_2\).
If it holds only at \(x_0\), it bounds the boundary mismatch;
\Cref{prop:composition,thm:ood_quad_nD} then require and use the whole-segment regularity assumptions.
Both derivatives in the argument are action-valued, after the stated control construction; a state-valued dynamics output is not interchangeable with that reference.
\end{proof}

\subsection{Conditional invariant filtering}
\label{app:proof_irm}

\begin{proof}[Proof of \Cref{thm:irm_causal}]
For the invariant model, decompose its test-environment derivative error as
\[
\mathrm D\pi_{\mathrm{ref}}^{e_\star}
-\mathrm D\pi_{\mathrm{sym}}^{\mathrm{inv}}
=(\mathrm D\pi_{\mathrm{inv}}
-\mathrm D\pi_{\mathrm{sym}}^{\mathrm{inv}})
+\mathrm D\pi_{\mathrm{var}}^{e_\star}.
\]
Its norm is at most
\(\varepsilon_{\mathrm{inv}}+\varepsilon_{\mathrm{var}}^{e_\star}\).
For the single-environment model insert the training-environment reference:
\[
\begin{split}
\mathrm D\pi_{\mathrm{ref}}^{e_\star}
-\mathrm D\pi_{\mathrm{sym}}^{\mathrm{single}}
={}&[\mathrm D(\pi_{\mathrm{inv}}+\pi_{\mathrm{var}}^{e_0})
-\mathrm D\pi_{\mathrm{sym}}^{\mathrm{single}}]\\
&+\mathrm D\pi_{\mathrm{var}}^{e_\star}
-\mathrm D\pi_{\mathrm{var}}^{e_0}.
\end{split}
\]
The triangle inequality gives the stated second bound. These are upper bounds on two distinct errors. Their difference does not establish a lower bound on the difference of the actual errors.
\end{proof}

\begin{proof}[Proof of \Cref{thm:irm_pipeline}]
In environment \(e_\star\), add the uniform derivative bounds
\(\varepsilon_{\mathrm{irm}}\) and \(\varepsilon_2\) by the same triangle inequality.
Apply \Cref{thm:ood_linear_nD} with the stated base error.
\end{proof}

\section{Implemented update equations}
\label{app:operator_detail}

For a replay transition \((x,a,r,x',d_{\mathrm{term}})\),
\(d_{\mathrm{term}}\in\{0,1\}\) is the terminal indicator and \(\widetilde r\)
is the normalized reward when normalization is enabled. Sample
\(a'\sim p_\theta(\cdot\mid x')\).
Let \(n_{\bar\phi_k}\geq0\) be the parameter norm returned by
\texttt{compute\_reg\_norm}: the sum of absolute target-critic weights and
hidden biases, excluding the output bias. The code adds this norm offset;
it is not a directly measured aleatoric uncertainty.
For the minimum-ensemble setting, the detached target for critic \(k\) is
\begin{equation}
\begin{split}
y_k=\widetilde r+\gamma(1-d_{\mathrm{term}})\big[
&Q_{\min}(x',a')-\alpha\log p_\theta(a'\mid x')\\
&+\lambda_{\mathrm{reg}}n_{\bar\phi_k}
-\lambda_{\mathrm{sym}}\widehat{\mathcal L}_{\mathrm{JC}}\big],
\end{split}
\label{eq:critic_target}
\end{equation}
where \(\lambda_{\mathrm{reg}}\) is the norm-offset coefficient and
\(\widehat{\mathcal L}_{\mathrm{JC}}\) is the batch estimator of
\eqref{eq:jac_loss} in the revised implementation. The historical symbolic
target instead used \eqref{eq:projected_loss}. For the per-critic-target setting,
\(Q_{\min}\) is replaced by \(Q_{\bar\phi_k}\).
The symbolic term is omitted when the symbolic model is unavailable or disabled.

Define the ensemble mean and sample standard deviation by
\[
\bar Q(x,a)=\frac{1}{N_{\mathrm{crit}}}\sum_{k=1}^{N_{\mathrm{crit}}}Q_{\phi_k}(x,a),
\qquad
\sigma_Q(x,a)=
\left[\frac{\sum_k(Q_{\phi_k}(x,a)-\bar Q(x,a))^2}
{N_{\mathrm{crit}}-1}\right]^{1/2}.
\]
The shared-critic loss is
\begin{equation}
\mathcal L_Q=
\mathbb E_{\mathrm{batch}}\!\left[
\frac{1}{N_{\mathrm{crit}}}\sum_k(Q_{\phi_k}(x,a)-y_k)^2
+\beta_{\mathrm{ood}}\sigma_Q(x,a)\right],
\label{eq:critic_loss}
\end{equation}
where \(\mathbb E_{\mathrm{batch}}\) denotes the replay-batch average and
\(\beta_{\mathrm{ood}}\) is its spread coefficient.
For a fresh action \(\widehat a\sim p_\theta(\cdot\mid x)\), the arithmetic-actor objective is
\begin{equation}
\begin{split}
\mathcal L_\pi={}&
\mathbb E_{\mathrm{batch},\,\widehat a}
[-\bar Q(x,\widehat a)-\beta_{\mathrm{eff}}\sigma_Q(x,\widehat a)
+\alpha\log p_\theta(\widehat a\mid x)
+\frac{\beta_{\mathrm{bc}}}{d_a}\|\widehat a-a\|^2]\\
&+\lambda_{\mathrm{JC}}\widehat{\mathcal L}_{\mathrm{JC}}
+\lambda_{\mathrm{NAU}}\mathcal L_{\mathrm{NAU}}.
\end{split}
\label{eq:actor_loss}
\end{equation}
Here \(\lambda_{\mathrm{JC}}\) and \(\lambda_{\mathrm{NAU}}\) weight the
full action-space consistency term and the NAU weight regularizer. The revised
actor uses the unclipped consistency loss; the historical version clipped
the projected loss at 2. The implemented
BC term is a mean squared error over the \(d_a\) action coordinates.
Other heads omit the NAU regularizer or use their own regularizer.
The primary transit comparisons set \(\lambda_{\mathrm{JC}}\) and
\(\lambda_{\mathrm{sym}}\) to zero. The six-setting comparison does not
activate the optional Q-spread caps or extra curvature penalties.

For the implementation's run-length hypotheses, \(\ell\) takes the values
\(0,\ldots,H_{\max}-1\). At update \(j\), the code uses
\[
\lambda_w=\sum_{\ell=0}^{H_{\max}-1}\rho_j(\ell)\exp(-0.1\ell)
\]
after warmup; it is zero during warmup. Here \(\rho_j\) is the belief at
update \(j\), and \(H_{\max}\) is the maximum
run-length hypothesis count. The actor coefficient is
\(\beta_{\mathrm{eff}}=-2-c_{\mathrm{pen}}\lambda_w\).
The norm offset, sampled next actions, target network, actor, and symbolic
model are detached or held fixed for each critic target construction.
The abstract contraction result concerns a fixed actor and fixed penalty
fields; it does not establish convergence of these changing gradient updates.

\section{Symbolic fitting details}
\label{app:sindy_details}

The configured polynomial library is fit with STLSQ. Write
\(\mathsf X\) for joint sampled state-action inputs and \(\mathsf Y\) for regression targets.
In discrete mode the implemented target is the state increment
\(\Delta x_t=x_{t+1}-x_t\), not the next state itself:
\[
\mathsf Y\approx\boldsymbol\Phi(\mathsf X)\Xi^{\mathsf T},
\]
where the superscript \(\mathsf T\) denotes matrix transpose.
In continuous mode finite differences approximate the state derivative
using the sampling interval \(\Delta t\), changing the regression target
and its units. The frozen Torch wrapper evaluates
the polynomial terms and fitted coefficient matrix \(\Xi\), giving
\(f_{\mathrm{sym}}\) under the paper's discrete-mode convention.
The revised fit aligns every \(a_t\) with \(x_t\) and its increment before
concatenating episodes. Its wrapper uses the fitted polynomial library's
exponent matrix, including state-action cross terms and configured degrees
up to four. Both prediction paths require actions for control-aware models.
For degree-one, zero-intercept fits, the state coefficients describe
\(\widehat A-I\), and the action coefficients describe
\(\widehat B_{\mathrm{in}}\).
In LQR the environment applies force \(5a\), so the fitted input matrix
is in normalized-action units and the cost matrix is
\(R_{\mathrm{cost}}=25\operatorname{diag}(0.01,0.01)\);
\(Q_{\mathrm{cost}}=\operatorname{diag}(1,1,0.1,0.1)\).
The frozen reference uses the identified matrices with these task costs.
Prediction-quality diagnostics use the coefficient of determination
\(R^2\); they do not measure a uniform derivative error.
A prediction-quality threshold in the configuration is not a certificate
of \Cref{thm:sindy_jac}'s action-reference hypotheses.

\section{Arithmetic-head implementation}
\label{app:nau_details}

The historical feature extractor consists of two affine layers with LeakyReLU slope
\(0.01\), denoted \(G\). A fixed linear projection \(U_{\mathrm{quad}}\)
feeds the elementwise quadratic correction. The pre-squash mean is
\[
\mu_\theta(x)=
\eta_{\mathrm{mix}}W_{\mathrm{NAU}}G(x)
+(1-\eta_{\mathrm{mix}})
c_{\mathrm{quad}}\odot(U_{\mathrm{quad}}G(x))^{\odot2},
\]
where \(\odot\) denotes coordinatewise multiplication,
\(c_{\mathrm{quad}}\) is the vector of quadratic coefficients, and
\(\eta_{\mathrm{mix}}\in(0,1)\) is a learned sigmoid-transformed mixing
weight. Its unconstrained parameter is initialized at \(0.8\), giving
\(\eta_{\mathrm{mix}}=(1+\exp(-0.8))^{-1}\approx0.690\).
A separate head predicts the Gaussian log standard deviation.
For normalized actions the deterministic policy is \(\pi=\tanh\mu_\theta\).
An affine physical rescaling has offset \(b_a\) and scale \(c_a>0\);
in transit, \(h_{\mathrm{hold}}=b_a+c_a\pi\) with \(b_a=c_a=30\) s.

Training clamps \(W_{\mathrm{NAU}}\) to \([-1,1]\) and uses
\begin{equation}
\mathcal L_{\mathrm{NAU}}
=\frac{1}{N_W}\sum_{u,v}
\min\bigl(|(W_{\mathrm{NAU}})_{uv}|,\,
1-|(W_{\mathrm{NAU}})_{uv}|\bigr),
\label{eq:nau_regularizer}
\end{equation}
where \(u,v\) index matrix rows and columns and \(N_W\) is the number of
entries. This loss encourages discrete values but does not require exact
discreteness for the NAU derivative identity.
The regularizer in \eqref{eq:nau_regularizer} is the term used in
\eqref{eq:actor_loss}. The quadratic correction remains active in
\texttt{csbapr}, so this configuration is not a pure linear NAU actor.

\subsection{Complete composition calculation}
\label{app:leff}

Let \(G\) and \(H\) be continuously differentiable on neighborhoods of
the convex domains under consideration. Suppose
\(\|\mathrm DG\|\leq K_G\), \(\|\mathrm DH\|\leq K_H\), and their
derivatives have Lipschitz constants \(L_G,L_H\), respectively.
The derivative chain rule gives
\[
\begin{split}
\mathrm D(H\circ G)(x)-\mathrm D(H\circ G)(y)
={}&[\mathrm DH(G(x))-\mathrm DH(G(y))]\mathrm DG(x)\\
&+\mathrm DH(G(y))[\mathrm DG(x)-\mathrm DG(y)].
\end{split}
\]
Since \(\|G(x)-G(y)\|\leq K_G\|x-y\|\), its norm is at most
\((L_HK_G^2+K_HL_G)\|x-y\|\), proving
\eqref{eq:composition_rule}.
For the mixed head, a valid feature-space derivative Lipschitz bound is
\[
L_{H_{\mathrm{mix}}}
\leq2(1-\eta_{\mathrm{mix}})
\|c_{\mathrm{quad}}\|_\infty\|U_{\mathrm{quad}}\|_{\mathrm{op}}^2.
\]
Here the infinity norm is the maximum absolute vector entry. Even when the
head is linear, the \(K_HL_G\) term does not vanish unless the features
have constant derivatives.

For componentwise action squashing,
\(\sup|\tanh'|=1\) and
\(\sup|\tanh''|=4/(3\sqrt3)\).
One more application of the composition calculation yields
\begin{equation}
L_\pi\leq
\frac{4}{3\sqrt3}(K_HK_G)^2+L_HK_G^2+K_HL_G.
\label{eq:complete_composition}
\end{equation}
Equation~\eqref{eq:complete_composition} bounds the normalized deterministic
action map, including both feature curvature and tanh curvature.
An affine physical rescaling multiplies this derivative bound by \(c_a\).
The historical logged curvature surrogate omitted these global terms.
For example, \(\pi(x)=\tanh(\operatorname{LeakyReLU}_{0.01}(x))\)
has left and right derivatives \(0.01\) and \(1\) at zero. No finite
global derivative Lipschitz constant exists across that activation boundary.
Within a fixed activation region the feature map is affine, but such a
local calculation does not certify a segment that crosses regions.

\paragraph{Smooth arithmetic variant.}
The revised \texttt{smooth-nau} actor replaces both feature activations by
tanh, keeping the arithmetic and quadratic heads. Let \(W_1,W_2\) be the
two feature-layer matrices; their common hidden width is the feature
dimension \(d_z\). Define \(c_{\tanh}=4/(3\sqrt3)\). Since componentwise tanh has
derivative norm at most one and derivative Lipschitz constant
\(c_{\tanh}\), two applications of \eqref{eq:composition_rule} give
\[
\begin{aligned}
K_G&\leq\|W_2\|_{\mathrm{op}}\|W_1\|_{\mathrm{op}},\\
L_G&\leq c_{\tanh}\big[(\|W_2\|_{\mathrm{op}}\|W_1\|_{\mathrm{op}})^2
+\|W_2\|_{\mathrm{op}}\|W_1\|_{\mathrm{op}}^2\big].
\end{aligned}
\]
The features obey \(\|G(x)\|\leq R_G=\sqrt{d_z}\), where
\(R_G\) is their global radius bound. Thus
\[
K_H\leq\eta_{\mathrm{mix}}\|W_{\mathrm{NAU}}\|_{\mathrm{op}}+L_HR_G.
\]
Substituting these constants and the quadratic-head bound into
\eqref{eq:complete_composition} yields the implemented global certificate.
Feature biases do not affect the norm bounds because tanh remains bounded.
For \(\pi_{\mathrm{sym}}(x)=\tanh(-\widehat K_{\mathrm{fb}}x)\), the same
calculation gives reference curvature at most
\(c_{\tanh}\|\widehat K_{\mathrm{fb}}\|_{\mathrm{op}}^2\).
The boundary errors are computed as Euclidean action error and operator-norm
Jacobian error at \(x_0\), then inserted into \eqref{eq:ood_bound}.

\section{Cross-environment selection}
\label{app:irm_protocol}

For each fitted symbolic candidate \(\Xi\), let
\(\widehat E_e(\Xi)\) be its mean Euclidean prediction error in training
environment \(e\), evaluated on that environment's collected states and
targets. The implementation selects
\begin{equation}
\Xi_{\mathrm{sel}}\in
\operatorname*{arg\,min}_{\Xi\in\mathfrak C}
\operatorname{Var}_{e\in\mathcal E_{\mathrm{train}}}
[\widehat E_e(\Xi)],
\label{eq:invariance_selector}
\end{equation}
where \(\mathfrak C\) is the finite set of fitted candidates and
\(\Xi_{\mathrm{sel}}\) is the selected coefficient matrix.
The variance criterion \eqref{eq:invariance_selector} differs from
the earlier coefficient-averaging implementation as well as
IRMv1's gradient penalty. It can favor a model with uniformly large
prediction error. Neither derivative accuracy nor causal identification
follows from low variance alone. The default uses three environments;
their required number and diversity depend on the identification problem.

\newpage
\section{Defaults and experimental overrides}
\label{app:hyperparams}

Table~\ref{tab:hyperparams} lists shared implementation defaults.
Here \(\tau\) is the target-network interpolation coefficient,
\(\eta_{\mathrm{opt}}\) the optimizer step size,
\(N_{\mathrm{batch}}\) the replay-batch size, and
\(p_{\mathrm{haz}}\) the BOCD hazard probability.
Script overrides take precedence. The transit entrypoint has hidden width
64, state dimension 15, and action dimension 1; critic, batch, BC, and
symbolic-target overrides appear in the table. Stabilized actors use weight
decay \(10^{-4}\): NAU regularization has weight \(0.01\), KAN spline and
entropy regularization has weight \(0.001\), and MLPs have no head regularizer.
The pristine baseline uses zero weight decay and warmup, sets
\(c_{\mathrm{pen}}=5\), and disables the minimum target, normalization,
entropy floor, and rollout surprise. Transit experiments disable symbolic
filtering. \Cref{sec:limitations} distinguishes the historical legacy row.

\begin{table}[H]
\centering
\caption{Shared defaults and the primary transit overrides.}
\label{tab:hyperparams}
\small
\begin{tabular}{ll}
\toprule
Quantity & Value \\
\midrule
Discount \(\gamma\) & \(0.99\) \\
Target interpolation \(\tau\) & \(0.005\) \\
Initial entropy temperature \(\alpha\) & \(0.2\), automatically tuned \\
Optimizer step size \(\eta_{\mathrm{opt}}\) & \(3\times10^{-4}\) \\
Critics \(N_{\mathrm{crit}}\); batch size \(N_{\mathrm{batch}}\) & \(5;\ 256\); transit critics: \(3\) \\
Run-length count \(H_{\max}\); hazard \(p_{\mathrm{haz}}\) & \(20;\ 0.02\) \\
Norm offset \(\lambda_{\mathrm{reg}}\) & \(0.01\) \\
Critic spread \(\beta_{\mathrm{ood}}\) & \(0.1\) \\
BC \(\beta_{\mathrm{bc}}\) & \(0.001\); stabilized transit: \(0\) \\
Symbolic target \(\lambda_{\mathrm{sym}}\) & \(0.01\); transit: \(0\) \\
Actor consistency \(\lambda_{\mathrm{JC}}\) & \(0.01\); transit: \(0\) \\
NAU regularizer \(\lambda_{\mathrm{NAU}}\) & \(0.01\) \\
Entropy floor \(\alpha_{\min}\) & \(0.01\) \\
Penalty scale \(c_{\mathrm{pen}}\) & \(2\) \\
Recent rollout count \(N_{\mathrm{tail}}\) & \(256\) \\
Warmup; gradient-norm clipping & \(100\) updates; \(1.0\) \\
SINDy threshold; polynomial degree & \(0.1;\ 2\) \\
Exploration episodes; environments \(N_{\mathrm{env}}\) & \(50;\ 3\) \\
\bottomrule
\end{tabular}
\end{table}

\end{document}